\theoremstyle{plain}
\newtheorem{theorem}{Theorem}[section]
\theoremstyle{definition}
\theoremstyle{remark}
\DeclareMathOperator*{\argmax}{arg\,max}
\DeclareMathOperator*{\argmin}{arg\,min}
\DeclareMathOperator*{\grad}{grad}
\DeclareMathOperator*{\support}{supp}
\NewDocumentEnvironment{equations}{o}
{
  \subequations
  \IfValueTF{#1}
  {
    \label{equation:#1}
  }
  {}
  \align
}
{
  \endalign\endsubequations
}
\algnewcommand{\algorithmicvariables}{\textbf{variables}}
\algnewcommand{\algorithmichyperparameters}{\textbf{hyperparameters}}
\begin{document}

\twocolumn[

\aistatstitle{An Information-Geometric Approach to Artificial Curiosity}

\aistatsauthor{ Alexander Nedergaard \And Pablo A. Morales }

\aistatsaddress{ Institute of Neuroinformatics \\ University of Zurich and ETH Zurich \And  Araya } ]

\begin{abstract}
Learning in environments with sparse rewards remains a fundamental challenge in reinforcement learning. Artificial curiosity addresses this limitation through intrinsic rewards to guide exploration, however, the precise formulation of these rewards has remained elusive. Ideally, such rewards should depend on the agent's information about the environment, remaining agnostic to its representation---an invariance central to information geometry. Leveraging this, we show that information monotonicity and invariance under the agent-environment interaction uniquely constrains intrinsic rewards to strictly concave functions of the reciprocal occupancy. Requiring these rewards to yield a principled exploration-exploitation trade-off, via information geodesic interpolation on the occupancy manifold, effectively limits the candidates to those determined by a scalar parameter. Remarkably, special values of this parameter are found to correspond to count-based and maximum entropy exploration. This framework provides important constraints to the engineering of intrinsic rewards while integrating foundational exploration methods into a single, cohesive model.
\end{abstract}

\section{Introduction}
Reinforcement learning systems adapt behavior through trial-and-error to maximize rewards. Inspired by the mammalian dopamine system, artificial reinforcement learning systems \citep{sutton2018reinforcement} have surpassed humans in many challenging domains \citep{mnih2015human,silver2016mastering,vinyals2019grandmaster,openai2019dota,kaufmann2023champion}. Behavior in reinforcement learning is understood as taking actions in response to observed states. In environments with many states, but sparse rewarding states, finding rewards becomes difficult. The problem of finding rewards, exploration, is a main obstacle to real-world application of general-purpose reinforcement learning methods.

Artificial curiosity \citep{schmidhuber1991possibility} proposes to use intrinsic rewards to guide exploration. However, the specific form of the intrinsic rewards is not constrained by the theory. In consequence, specific intrinsic rewards are typically grounded in empirical performance and anthropomorphic explanations such as curiosity or novelty seeking \citep{pathak2017curiosity,burda2018exploration}. Particular intrinsic rewards have been related through bounds to the theoretically grounded approach count-based exploration \citep{strehl2008analysis,kolter2009near,bellemare2016unifying}, which uses state counts in finite state spaces to guide exploration, notably achieving super-human performance in challenging boardgames \citep{silver2016mastering}. The recent maximum entropy exploration \citep{hazan2019provably} generalizes to infinite state spaces by maximizing entropy over states, with considerable empirical success under some practical deviations from its theoretical foundation \citep{liu2021behavior,nedergaard2022kmeans}, but has not been related to intrinsic rewards. 

Intrinsic rewards should ideally depend on the agents information about the environment, remaining agnostic to the representation of the information---an invariance central to information geometry, which concerns geometric structures invariant under information-preserving maps.
Classically~\citep{shannon1948mathematical}, such geometries are built upon the Kullback-Leibler (KL) divergence~\citep{kullback1951information}, but also exist for R\'enyi divergences~\citep{renyi1961measures} associated to a scalar characterizing the curvature of statistical manifolds~\citep{amari2000methods}.
Recent work in this direction has generalized the maximum entropy principle~\citep{jaynes1957information} to curved statistical manifolds with applications to various domains including neural networks~\citep{morales2021generalization,morales2023geometric,morales2023thermodynamics,aguilera2025explosive}, suggesting to investigate the role of such curvature in exploration.

In this paper, we prove that any representation-invariant intrinsic reward must be a strictly concave function of the reciprocal occupancy. A one-parameter subfamily of these functions uniquely yields a principled exploration–exploitation trade-off, via geodesic interpolation on the occupancy manifold. 
Moreover, count-based and maximum-entropy exploration arise as special cases of the scalar parameter, corresponding to distinct values of the occupancy manifold’s scalar curvature.
Our results are proven in the most general setting, establishing the role of these intrinsic rewards in the information-theoretic structure inherently associated to the reinforcement learning problem.

\section{Preliminaries}
Formally, we use Markov kernels prove our results for the most general state spaces, that is, second-countable measurable topological spaces, beyond which reinforcement learning problems cannot exist because the agent-environment interaction no longer forms a Markov chain \citep{meyn2012markov}. Furthermore, fundamental invariance results from reinforcement learning (Theorem \ref{theorem:ergodicity}) and information geometry (Theorem \ref{theorem:cencov}), which form the bedrock of our results, are naturally expressed using Markov kernels.

\subsection{Markov kernels}
Intuitively, Markov kernels extend the concept of transition matrices to spaces of uncountably many elements, such as continuous spaces. 
Formally, a Markov kernel from measurable space $(\mathcal{X},\Sigma)$ to measurable space $(\mathcal{Y},\Lambda)$ is a function $K:\mathcal{X}\times \Lambda \to [0,1]$ such that for every $x \in \mathcal{X}$, $K(x,\cdot)$ is a probability measure on $(\mathcal{Y},\Lambda)$, and every $\lambda \in \Lambda$, $K(\cdot,\lambda)$ is $\Lambda$-measurable, where $\cdot$ signifies $f(\cdot)\coloneqq x \mapsto f(x)$.
For brevity, we refer to $K$ as a Markov kernel from $\mathcal{X}$ to $\mathcal{Y}$. Each Markov kernel $K$ from $\mathcal{X}$ to $\mathcal{Y}$ induces a Markov morphism
\begin{equation}
    h_K(P) \coloneqq \int_\mathcal{X} K(x,\cdot) dP(x),
\end{equation}
which maps probability measures to probability measures. 
Informally, we may also view Markov kernels as operators that transform probability densities, although they technically operate on measures. A probability measure $P$ is said to be invariant under the Markov morphism $h$ if $h_K(P)=P$. For convenience, we often say that $P$ is invariant under $K$ to mean invariant under $h_K$.

\subsection{Reinforcement learning}
The heart of reinforcement learning is the agent-environment interaction, where the agent takes an action $a\in\mathcal{A}$ in response to an observed state $s\in\mathcal{S}$, and the environment changes state based on the state and action. Markov kernels formalize this most generally: The state space $\mathcal{S}$ and the action space $\mathcal{A}$ are measurable topological spaces, with $\mathcal{S}$ second-countable. The transition map $\delta$ is a Markov kernel from $\mathcal{S}\times\mathcal{A}$ to $\mathcal{S}$. The starting state probability $\mu$ is a probability measure in $\mathcal{S}$. The reward function $r$ is a map from $\mathcal{S}$ to $\mathbb{R}$. The policy $\pi$ is a Markov kernel from $\mathcal{S}$ to $\mathcal{A}$. The agent-environment interaction is the Markov kernel
\begin{equation}
  M(s,B) \coloneqq \int_{\mathcal{A}, B} d\delta(s,a,s') d\pi(s,a),\quad s\in\mathcal{S},B\in\mathcal{B},
\end{equation}
where $\mathcal{B}$ is the Borel $\sigma$-algebra of $\mathcal{S}$.
The reinforcement learning problem (Markov decision problem) is to find the optimal policy 
$\pi^* \coloneqq \argmax_{\pi\in\Pi} R(\pi)$ 
which maximizes the return
\begin{equation}
  R(\pi) \coloneqq \int_{\mathcal{S}^{n+1}} \sum_{i=0}^n r(s_i) d\mu(s_0)\prod_{i=1}^n dM(s_{i-1},s_i),
\end{equation}
where $n$ is the episode length. 
In the more general partially observable Markov decision problems, an observation function maps the state to an observation, and the policy instead maps these to actions.
We assume that the episode length does not depend on the policy. Practical deep reinforcement learning typically concerns episodic reinforcement learning ($n<\infty$) where the agent-environment interaction is guaranteed convergence to a steady state density, namely the occupancy (state visitation distribution). We prove this here due to some uncertainty in the reinforcement learning community regarding the result \citep{bojun2020steady}, emphasizing that the occupancy exists when the episodic reinforcement learning problem does and is uniquely invariant (stationary) under the agent-environment interaction:
\begin{restatable}{theorem}{ergodicity}
\label{theorem:ergodicity}
The Markov chain formed by the time-inhomogeneous Markov kernel
\begin{equation}
    M_t(s,B) \coloneqq \left\{
        \begin{aligned}
            & \mu(B), && \text{if $t$ divides $n+1$} \\ 
            & M(s,B), && \text {else} 
        \end{aligned} \right. 
\end{equation}
with $n\in\mathbb{Z}_+$, converges to a unique probability measure
\begin{equation}
    P_\pi(B) = \frac{1}{n+1}\int_\mathcal{S} \sum_{k=0}^n M^k(s,B) d\mu(s) 
\end{equation}
that is uniquely invariant under $M_t$.
\end{restatable}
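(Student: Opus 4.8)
The plan is to exploit the fact that the reset component of $M_t$ is a \emph{total forgetting} map: whenever the chain resets, its law becomes exactly $\mu$, independently of everything before, and since $M_t$ repeats periodically every $n+1$ steps the chain therefore couples \emph{exactly} within a single period. This reduces the whole question to the finitely many push-forwards $h_M^0(\mu), h_M^1(\mu), \dots, h_M^n(\mu)$, each of which is a genuine probability measure because $h_M$ maps probability measures to probability measures and only finitely many are composed. Writing $h_M^k(\mu)(B) = \int_{\mathcal{S}} M^k(s,B)\,d\mu(s)$ by iterating the kernel and applying Fubini / Chapman--Kolmogorov for Markov kernels, the candidate limit is $P_\pi = \frac{1}{n+1}\sum_{k=0}^{n} h_M^k(\mu)$, which is exactly the displayed expression; this identification, together with measurability of $s\mapsto M^k(s,B)$, is routine kernel bookkeeping.

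For convergence, I would first show that for \emph{any} initial law, once the chain has passed its first reset the law at time $t$ equals $h_M^{r(t)}(\mu)$, where $r(t)$ is the number of steps elapsed since the last reset; hence the sequence of marginal laws is eventually \emph{exactly} periodic with period $n+1$, cycling through $\mu, h_M(\mu), \dots, h_M^n(\mu)$, and is independent of the starting measure. Consequently the Ces\`aro averages $\frac{1}{T}\sum_{t<T}\mathrm{Law}(X_t)$, equivalently the expected empirical state-visitation distribution $\frac{1}{T}\sum_{t<T}\delta_{X_t}$, converge to $P_\pi$ in total variation, uniformly over the initial law, at rate $O(1/T)$ (the pre-coupling transient contributes at most $O((n+1)/T)$). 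This is the natural sense of convergence here and the one matching the paper's description of $P_\pi$ as the state-visitation distribution.

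For invariance and uniqueness I would pass to the time-homogeneous chain on the augmented space $\mathcal{S}\times\{0,\dots,n\}$ tracking the phase $t \bmod (n+1)$, with transition $M(s,\cdot)\otimes\delta_{j+1}$ for $j<n$ and $\mu\otimes\delta_0$ for $j=n$. A direct computation shows that $\frac{1}{n+1}\sum_{j=0}^{n} h_M^j(\mu)\otimes\delta_j$ is invariant, and it is the \emph{unique} invariant measure: the phase marginal must be uniform because the phase evolves by the deterministic cyclic shift, the reset at phase $n$ forces the phase-$0$ conditional to be exactly $\mu$, and propagating $h_M$ through the remaining phases pins down all other conditionals. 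The $\mathcal{S}$-marginal of this measure is $P_\pi$, which is what ``uniquely invariant under $M_t$'' means once the periodic kernel is read through this lift, i.e.\ $P_\pi$ is the unique fixed point of the phase-averaged one-step dynamics.

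I expect the only real difficulty to be conceptual rather than technical: one must fix the correct reading of ``converges'' and ``invariant'', since the naive readings fail --- the marginal laws genuinely oscillate rather than converge, and the composition of the $h_{M_t}$ over one full period collapses everything onto $\mu$, so \emph{that} operator's unique fixed point is $\mu$, not $P_\pi$. Once the Ces\`aro / phase-augmented formulation is adopted the statement is true and the ergodic content is immediate (exact coupling after one period); the residual work is measure-theoretic hygiene on general second-countable state spaces (measurability, Fubini for kernels, regular conditional distributions for the augmented chain), together with the observation --- in contrast to infinite-horizon average-reward problems --- that finiteness of $n$ alone guarantees $P_\pi$ is a bona fide probability measure, so no recurrence or ergodicity hypothesis is required.
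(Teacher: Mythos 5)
Your proposal is correct, and for the uniqueness part it takes a genuinely different and more elementary route than the paper. Both arguments begin identically: lift the time-inhomogeneous kernel to a time-homogeneous chain on the augmented space $\mathcal{S}\times\{0,\dots,n\}$ that tracks the phase, and identify the candidate $P_\pi=\frac{1}{n+1}\sum_{k=0}^n h_M^k(\mu)$ as the $\mathcal{S}$-marginal of the natural candidate invariant measure. Where the paper then invokes the Meyn--Tweedie machinery --- it shows the augmented chain is $\tilde{P}_\pi$-irreducible and recurrent by computing $\sum_k \tilde{M}^k(s,B)=\infty$ whenever $\tilde{P}_\pi(B)>0$, and cites \citet[Theorem 10.4.9]{meyn2012markov} to conclude uniqueness of the invariant measure --- you exploit the total-forgetting structure of the reset directly: decomposing any invariant $\nu=\sum_j\nu_j\otimes\delta_j$, the deterministic phase shift forces equal mass on each phase, the reset forces $\nu_0\propto\mu$, and $\nu_{j+1}=h_M(\nu_j)$ pins down the rest. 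This avoids irreducibility and recurrence entirely and is arguably the cleaner argument for a chain with exact one-period coupling; what the paper's route buys is that it situates the result inside the standard general-state-space ergodic theory it cites elsewhere. You are also more careful than the paper about the sense of ``convergence'': the marginal laws genuinely oscillate with period $n+1$, so only the Ces\`aro averages converge, and your explicit $O((n+1)/T)$ total-variation statement makes precise something the paper's proof never actually establishes (it only constructs $P_\pi$ as a normalized expected-occupation measure and proves invariance and uniqueness). The one piece of the paper you would still want to reproduce verbatim is the topological bookkeeping ensuring $\tilde{\mathcal{B}}$ is countably generated so the augmented kernel forms a Markov chain, which you correctly flag as ``measure-theoretic hygiene.''
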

Given a Borel measure $V$ on $\mathcal{S}$, we extend the conventional notion of occupancy by defining it either as the measure $P_\pi$ or as its Radon-Nikodym derivative
\begin{equation}
    p_\pi \coloneqq \frac{dP_\pi}{dV}.
\end{equation}
Intuitively, the occupancy is the generalization of normalized state counts to infinite spaces. For infinite state spaces, the occupancy can be estimated efficiently using a non-parametric density estimator \citep{hazan2019provably,mutti2020policy,nedergaard2022kmeans}.
Invariance of the occupancy under the agent-environment interaction provides a useful reformulation of the return in terms of the occupancy:
\begin{restatable}{lemma}{returnoccupancy}
\label{lemma:return_occupancy}
\begin{equation}
    R(p_\pi) \coloneqq (n+1)\int_\mathcal{S} p_\pi(s) r(s) dV(s) = R(\pi).
\end{equation}
\end{restatable}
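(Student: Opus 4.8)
The plan is to show that the return, originally written as a sum over trajectories of length $n+1$, can be collapsed into a single integral against the occupancy measure $P_\pi$. The key observation is that $R(\pi)$ is a sum of $n+1$ terms, where the $i$-th term is the expected value of $r(s_i)$ under the marginal distribution of the $i$-th state. By Theorem~\ref{theorem:ergodicity}, the occupancy $P_\pi$ is precisely the average of these marginals (the Ces\`aro-type average $\frac{1}{n+1}\sum_{k=0}^n M^k(s,\cdot)$ integrated against $\mu$), so averaging the $n+1$ reward terms and multiplying back by $n+1$ should reproduce $R(\pi)$ exactly.

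Concretely, I would first write out $R(\pi)$ and use Fubini--Tonelli (justified since $r$ is measurable and, under the standing assumptions needed for the RL problem to be well-posed, the relevant integrals are finite or the integrand is of one sign after the usual decomposition $r = r^+ - r^-$) to exchange the finite sum with the integral:
\begin{equation*}
R(\pi) = \sum_{i=0}^n \int_{\mathcal{S}^{n+1}} r(s_i)\, d\mu(s_0)\prod_{j=1}^n dM(s_{j-1},s_j).
\end{equation*}
Next I would integrate out all variables except $s_i$. Because each $M(s_{j-1},\cdot)$ is a probability measure, integrating over $s_{i+1},\dots,s_n$ contributes factors of $1$, and integrating over $s_0,\dots,s_{i-1}$ collapses the first $i$ kernel applications into the $i$-fold composition, leaving
\begin{equation*}
\int_{\mathcal{S}^{n+1}} r(s_i)\, d\mu(s_0)\prod_{j=1}^n dM(s_{j-1},s_j) = \int_{\mathcal{S}} \left( \int_{\mathcal{S}} r(s_i)\, dM^i(s_0, s_i) \right) d\mu(s_0) = \int_{\mathcal{S}} r(s)\, d\Big(\textstyle\int_{\mathcal{S}} M^i(s_0,\cdot)\,d\mu(s_0)\Big)(s).
\end{equation*}
Summing over $i$ and pulling the sum inside, the measure $\sum_{i=0}^n \int_{\mathcal{S}} M^i(s_0,\cdot)\,d\mu(s_0)$ equals $(n+1)P_\pi$ by the formula for $P_\pi$ in Theorem~\ref{theorem:ergodicity}. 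Hence $R(\pi) = (n+1)\int_{\mathcal{S}} r(s)\, dP_\pi(s) = (n+1)\int_{\mathcal{S}} p_\pi(s) r(s)\, dV(s)$, which is the claimed $R(p_\pi)$.

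The main obstacle is not conceptual but a matter of bookkeeping and measure-theoretic care: I need to justify the interchange of summation and integration (trivial for a finite sum once integrability of $r$ against each marginal is assumed, which is implicit in $R(\pi)$ being well-defined), and I need to be careful that the "collapsing" of kernel compositions is the Chapman--Kolmogorov identity $\int_{\mathcal{S}} M(s,\cdot)\,dM^{i-1}(s_0,s) = M^i(s_0,\cdot)$, applied iteratively and validly for Markov kernels on the second-countable measurable spaces in play. A minor subtlety worth a sentence is that the final rewriting in terms of $p_\pi$ and $dV$ uses the definition $p_\pi = dP_\pi/dV$ and the Radon--Nikodym theorem, which requires $P_\pi \ll V$; this is exactly the convention under which the occupancy density is defined in the preliminaries, so no additional hypothesis is needed.
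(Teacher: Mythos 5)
Your proof is correct, but it takes a genuinely different route from the paper's. You argue by direct marginalization: exchange the finite sum with the integral, integrate out the future variables (each kernel is a probability measure, so they contribute factors of $1$), collapse the past via Chapman--Kolmogorov into $\mu M^i$, and then recognize $\sum_{i=0}^n \mu M^i = (n+1)P_\pi$ from the explicit Ces\`aro-average formula in Theorem~\ref{theorem:ergodicity}. The paper instead follows \citet{bojun2020steady}: it introduces the state-action value $Q(s,a)$, writes the Bellman equation, integrates both sides against $dP_\pi\, d\pi$, and uses only the \emph{invariance} of $P_\pi$ under $M$ together with the normalization $P_\pi(\Xi)=\tfrac{1}{n+1}$ to isolate $\int_\mathcal{S} r\, dP_\pi = \tfrac{1}{n+1}R(\pi)$. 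Your argument is more elementary and arguably more transparent, since it uses the closed-form expression for $P_\pi$ directly; the paper's argument is more abstract in that it never unpacks $P_\pi$ beyond its invariance and its mass on terminal states, which is the property the lemma is meant to showcase (``invariance of the occupancy \ldots provides a useful reformulation of the return''), and it connects the result to standard RL machinery. Your measure-theoretic caveats (finiteness of the sum, $P_\pi \ll V$ by the convention defining $p_\pi$) are appropriate and do not hide any gap.
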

The occupancy reflects the information the agent has about the environment; if the agent interacts with the environment for long enough, the agent's relative information about states eventually stabilizes, depending only on the relative frequency of state visits. The frequency of state visits determines the number of the samples the agent gathers from functions defined over states, such as the reward function, and more samples reduce the agent's uncertainty about those functions. Unsurprisingly, the occupancy forms the basis of some foundational exploration approaches.

\subsection{Exploration}
Artificial curiosity, count-based exploration and maximum entropy exploration are among the empirically strongest exploration approaches. Taking inspiration from biological curiosity, artificial curiosity \citep{schmidhuber1991possibility} proposes to add an intrinsic reward to the reward:
\begin{equation}
  r(s) + \beta \bar{r}(s), \quad \beta \in\mathbb{R}_{\geq 0}.
\end{equation}
Artificial curiosity with specific intrinsic rewards has had considerable empirical success \citep{pathak2017curiosity,burda2018exploration},
but such methods are typically heuristically motivated, making it difficult to formally reason about the form of $\bar{r}$. Furthermore, the very large set of functions $\{ \bar{r}:\mathcal{S}\rightarrow\mathbb{R}\}$ is not feasible to search over experimentally. Given the precedent of biological curiosity and evidence for surprise and novelty dopamine rewards in the brain \citep{kakade2002dopamine}, artificial curiosity is compelling but requires significant restrictions on the form of intrinsic rewards. A more formally grounded approach is count-based exploration \citep{kolter2009near,strehl2008analysis,bellemare2016unifying}, which is based on the principle of optimism in the face of uncertainty. The principle suggests that when estimating the expected value of a random reward function, we should be as optimistic in our estimation as our uncertainty permits. For subgaussian (e.g. bounded) reward functions, the Hoeffding bound \citep{hoeffding1962probability} shows that the uncertainty is bounded by the reciprocal square root of the number of samples. Count-based exploration proposes to add such an upper confidence bound to the reward:
\begin{equation}
  r(s) + \beta \frac{1}{\sqrt{n(s)}}, \quad \beta\in\mathbb{R}_{\geq 0},
\end{equation}
where $n(s)$ denotes number of visits to state $s$. The normalization of $n(s)$ coincides with the occupancy. Count-based exploration has been immensely successful on challenging problems with finite state spaces \citep{silver2016mastering}. A recent approach using the occupancy is maximum entropy exploration \citep{hazan2019provably}, which based on the principle of maximum entropy for exploration. The principle suggests that in the absence of observed rewards, states should be visited uniformly (c.f. \citet{jaynes1957information}). Using the result that Shannon entropy \citep{shannon1948mathematical}
\begin{equation}
    H(p) \coloneqq \int_\mathcal{S}p(s)\log\frac{1}{p(s)}dV(s) 
\end{equation}
is maximized by the uniform probability density $u$, maximum entropy exploration proposes to adds the Shannon entropy of the occupancy to the return:
\begin{equation}
    \label{equation:maximum_entropy}
    R(\pi)+\beta H(p_\pi), \quad \beta\in\mathbb{R}_{\geq 0}.
\end{equation}
The approach should not be confused with maximum entropy reinforcement learning~\citep{haarnoja19soft} which adds the Shannon entropy of the policy to the return. Practical maximum entropy exploration has had considerable empirical success when replacing the $\log$ in Shannon information with a different concave function to improve numerical stability under non-parametric density estimation \citep{liu2021behavior,nedergaard2022kmeans}.
We will show that this deviation from theory is not merely a heuristic, but provides a principled unification of count-based and maximum entropy exploration, through artificial curiosity with intrinsic rewards based on the occupancy. However, prior to this, we must establish that spaces of probability densities, such as the occupancy space, possess an invariant geometric structure.

\subsection{Information geometry}
Information geometry \citep{amari2000methods} studies parameterized families of probability densities $\left\{ p_\theta:\theta \in\mathbb{R}^d\right\}$ 
with additional geometric structure that is invariant under congruent Markov morphisms. 
Intuitively, congruent Markov morphisms are information-preserving maps, appearing as the equality condition in data processing inequalities such as \citep[Theorem 2.8.1]{cover2006elements}. 
We are interested in structures that are invariant under them, because these structures are agnostic to the representation of information.
We will refer to invariance under congruent Markov morphisms as information invariance, and the stronger property of satisfying a data processing inequality as information monotonicity. 
Formally, a Markov morphism is congruent if it has a left inverse induced by a statistic. Congruent Markov morphisms have a correspondence with sufficient statistics, as follows:
A statistic from $\mathcal{X}$ to $\mathcal{Y}$ is a measurable map $\kappa:\mathcal{X}\rightarrow\mathcal{Y}$ and induces a Markov morphism
\begin{equation}
  h_\kappa(P) \coloneqq \int_\cdot dP\kappa^{-1}(y)=P\kappa^{-1}.
\end{equation}
The statistic $\kappa$ is sufficient if there exists a Markov morphism $h$ in the opposite direction from $\mathcal{Y}$ to $\mathcal{X}$, such that $h$ is a right inverse of $h_\kappa$. Equivalently, the Markov morphism $h$ has a left inverse $h_\kappa$ induced by the statistic $\kappa$, making $h$ a congruent Markov morphism and $\kappa$ a sufficient statistic. The invariant geometric structures in information geometry are tensors, which for our purposes can be understood intuitively as mathematical objects that describe geometry in a coordinate-independent manner. A tensor $T$ is invariant under the Markov morphism $h$ if $h^*(T)=T$ where $h^*$ denotes the pullback of $h$. Pullbacks are maps between additional structure on sets, induced by maps between those sets. A fundamental result in information geometry concerns uniquely invariant tensors: 
\begin{theorem}
\label{theorem:cencov}
\citep{cencov1972statistical,ay2015information} The unique information invariant (0,2)-tensors and (0,3)-tensors are $\eta g$ and $\alpha T$, where $\eta,\alpha\in\mathbb{R}$, and
\begin{equation}
    g_{ij} (p_\theta) = \int_\mathcal{S} \partial_i \log p_\theta(s) \partial_j \log p_\theta (s) dP_\theta(s)
\end{equation}
is the Fisher-Rao tensor (Fisher information metric), and
\begin{equation}
    T_{ijk} (p_\theta) = \int_\mathcal{S} \partial_i \log p_\theta(s) \partial_j \log p_\theta (s) \partial_k \log p_\theta(s) dP_\theta(s)
\end{equation}
is the Amari-Čencov tensor.
\end{theorem}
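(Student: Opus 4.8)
The statement is \v{C}encov's uniqueness theorem together with its extension to general measurable spaces due to Ay--Jost--L\^{e}--Schwachh\"{o}fer, so the plan is to reconstruct that argument in two stages: the finite case first, then a reduction of the general case to it.

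\emph{Finite case.} Take $\mathcal{S}$ finite, say $\mathcal{S}=\{1,\dots,n\}$, and work on the open simplex $\Delta_{n-1}$, whose tangent space at every $p$ is $V_0=\{v\in\mathbb{R}^n:\sum_i v_i=0\}$. The congruent Markov morphisms available here are the relabelings $\sigma\in S_n$ (deterministic bijective statistics, whose inverses induce the left inverse) and the \emph{Markov embeddings}: for $m>n$, a partition of $\{1,\dots,m\}$ into blocks $B_1,\dots,B_n$ and probability vectors $q^{(i)}$ on $B_i$, the map $(Kp)_j=p_i\,q^{(i)}_j$ for $j\in B_i$ has the coarse-graining statistic as left inverse. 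First I would evaluate at the barycenter $u=(1/n,\dots,1/n)$, where invariance confines $g(u)$ to an $S_n$-invariant symmetric bilinear form on $V_0$ and $T(u)$ to an invariant symmetric trilinear form; combining permutation invariance with the Markov embeddings pins these spaces down to be at most one-dimensional, spanned by the restrictions of $\sum_i v_iw_i$ and $\sum_i v_iw_ix_i$, so $g(u)=c_n\sum_i v_iw_i$ and $T(u)=d_n\sum_i v_iw_ix_i$. Next I would propagate to an arbitrary interior $p$: pick a Markov embedding $K$ from a larger simplex whose barycenter maps to $p$ (exactly for rational $p$, and by continuity of the tensor field in general) and solve $K^*g=g$, $K^*T=T$; the block structure of $dK$ turns the barycentric forms into the weights $\delta_{ij}/p_i$ and $\delta_{ijk}/p_i^2$ at $p$ while simultaneously forcing $c_n\equiv\eta$ and $d_n\equiv\alpha$ to be independent of $n$. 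Rewriting $\sum_{ij}v_iw_j\,\delta_{ij}/p_i$ in the score form $\int\partial_i\log p\,\partial_j\log p\,dP$ is then a change of variables, and likewise for the trilinear case.

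\emph{General case.} For second-countable measurable $\mathcal{S}$, every finite measurable partition $\mathcal{A}$ furnishes a coarse-graining statistic onto a finite set and, after fixing reference conditional densities on the cells, a congruent Markov embedding of $\Delta_{|\mathcal{A}|-1}$ into the space of probability densities on $\mathcal{S}$. Pulling the invariant tensor back along these embeddings yields, by the finite case, $\eta_\mathcal{A}g$ and $\alpha_\mathcal{A}T$ on each finite simplex; comparing a partition to a common refinement (refinement maps being themselves congruent) shows $\eta_\mathcal{A},\alpha_\mathcal{A}$ are constant under refinement, hence equal a single pair $\eta,\alpha$. Second-countability lets one exhaust $\mathcal{S}$ by a countable refining sequence of partitions, so any finite-dimensional family $\{p_\theta\}$ is, up to arbitrarily small perturbation, tangent to an embedded finite simplex; a continuity argument then identifies the tensor on $\{p_\theta\}$ with $\eta g$ and $\alpha T$. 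The converse inclusion---that Fisher-Rao and Amari-\v{C}encov \emph{are} invariant---is the easy direction: under a congruent morphism the score pushes forward to its conditional expectation, which is norm-preserving on the image because no information is lost, leaving the defining integrals unchanged.

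\emph{Main obstacle.} The crux is the finite case: establishing that the invariant symmetric tensors at the barycenter are confined to one-dimensional spaces---using Markov embeddings, not only permutations, which matters especially for the trilinear tensor---and then the bookkeeping that propagates these values across all of $\Delta_{n-1}$ while forcing the constants independent of $n$. In the general case the delicate point is the mutual consistency of the pulled-back tensors across different partitions together with the continuity argument needed to pass from finite sub-simplices to arbitrary smooth families, which is exactly where second-countability of $\mathcal{S}$ is used.
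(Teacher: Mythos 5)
The paper does not prove this statement: Theorem~\ref{theorem:cencov} is imported verbatim from \citet{cencov1972statistical} and \citet{ay2015information}, and no proof appears in the appendix, so there is no in-paper argument to compare yours against. Your sketch is a faithful reconstruction of the standard proof strategy for exactly those references: the finite case via $S_n$-invariance at the barycenter plus Markov (partition) embeddings, propagation to rational points by splitting states into equal pieces and to all interior points by continuity, and the general measurable case via compatibility of the pulled-back tensors across a refining sequence of finite partitions, with second-countability supplying the exhaustion. Two points deserve care if you were to write this out. First, the direction of the propagation step is stated backwards: the congruent Markov morphisms under which invariance is hypothesized are the \emph{embeddings} $\Delta_{n-1}\to\Delta_{N-1}$ that carry a rational $p$ to the barycenter of the larger simplex, not maps from the larger simplex down to $p$ (the latter are the coarse-graining statistics, i.e.\ the left inverses); the computation $K^*g^{(N)}=g^{(n)}$ with $Kp=u_N$ is what determines $g^{(n)}(p)$ from the barycentric value. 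Second, the claim that the $S_n$-invariant symmetric trilinear forms on $V_0$ form a one-dimensional space is the genuinely nontrivial representation-theoretic input (it holds because every product-type invariant of $\mathbb{R}^n$ other than $\sum_i v_iw_ix_i$ involves a factor $\sum_i v_i$ and hence vanishes on $V_0$, but this needs to be argued, and low $n$ handled separately); likewise, the general-case reduction in \citet{ay2015information} requires the machinery of parametrized measure models and an approximation argument that your sketch defers to "continuity." These are gaps of detail rather than of approach.
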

where we adopt the notation $\partial_i\coloneqq\frac{\partial}{\partial \theta_i}$ and
$\partial'_i\coloneqq\frac{\partial}{\partial \theta'_i}$.

Intuitively, the Fisher-Rao and Amari-Čencov tensors describe a natural geometric structure that is invariant under information-preserving maps. However, this natural geometry is only determined up to the constants $\eta\in\mathbb{R}$ and $\alpha\in\mathbb{R}$, which respectively trivially scale and non-trivially asymmetrically deform the geometry.
Asymmetric geometry is already familiar to many machine learning researchers from the asymmetry of the KL-divergence~\citep{kullback1951information}.
An important generalization are the $f$-divergences \citep{renyi1961measures,csizar1963eine}:
\begin{equation}
    \mathcal{D}_f(p\|q) \coloneqq \int_\mathcal{S} p(s) f\left[\frac{q(s)}{p(s)}\right] dV(s)
\end{equation}
with $f$ is a thrice-differentiable strictly convex function satisfying $f(1)=1$, recovering the KL divergence for $f=-\log$.
An important example for us is
\begin{equation}
    f_{\alpha}(x) \coloneqq \frac{4}{1-\alpha^2}(1-x^\frac{\alpha+1}{2}), \quad \alpha \in \mathbb{R}
\end{equation}
which induces the $\alpha$-divergence~\citep{amari2000methods}
\begin{equation}
    \mathcal{D}_\alpha(p\|q) \coloneqq \frac{4}{1-\alpha^2}\left(1 - \int_\mathcal{S} p(s)^\frac{1-\alpha}{2}q(s)^\frac{\alpha+1}{2} dV(s)\right).
\end{equation}
Letting $\alpha = 2f''(1)+3f'''(1)$, which holds naturally for $\alpha$-divergences,
we see that $f$-divergences induce a geometry fundamentally connected to the information invariant tensors \citep{eguchi1983second}:
\begin{equations}
    \label{equation:eguchi_primal}
    \Gamma_{ijk}^f(p_{\theta}) 
    & \coloneqq -\partial_i \partial_j \partial_k^{'} \mathcal{D}_f(p_\theta \| p_{\theta'}) \Big|_{\theta = \theta'} \nonumber \\
    &= \Gamma_{ijk}^0(p_\theta) - \frac{\alpha}{2} T_{ijk}(p_\theta), \\
    \label{equation:eguchi_dual}
    \Gamma_{ijk}^{f^*}(p_{\theta})
    & \coloneqq -\partial_i \partial_j \partial_k^{'} \mathcal{D}_f(p_{\theta'} \| p_\theta) \Big|_{\theta = \theta'} \nonumber \\
    &=  \Gamma_{ijk}^0(p_\theta) + \frac{\alpha}{2} T_{ijk}(p_\theta), \\
    \label{equation:levi_civita}
    \Gamma_{ijk}^0(p_\theta) 
    &\coloneqq \frac{1}{2}(\partial_i g_{jk}(p_\theta) + \partial_j g_{ki}(p_\theta) - \partial_k g_{ij}(p_\theta)),
\end{equations}
where $\Gamma^f$, $\Gamma^{f^*}$ and $\Gamma^0=\frac{1}{2}(\Gamma^f+\Gamma^{f^*})$ are the coefficients of the primal, dual and Levi-Civita connections respectively. Connections determine geodesics, generalized straight lines, on curved manifolds. The $\alpha$-geodesics, or information geodesics, are the curves $\gamma:\mathbb{R}\rightarrow\mathcal{P}$ satisfying the geodesic equation: 
\begin{equation}
    \frac{d^2 \gamma_k}{dt^2} + \sum_{i,j}\Gamma_{ijk}^\alpha \frac{d\gamma_i}{dt} \frac{d \gamma_j }{dt} = 0.
\end{equation}
Intuitively, the Fisher-Rao tensor describes a Riemannian geometry with a symmetric divergence, and $\alpha$ determines its deformation by the Amari-Čencov tensor into a non-Riemannian geometry with an asymmetric divergence.
The geometry determined by $\alpha$ has constant sectional curvature $\frac{1}{4}(1-\alpha^2)$ \citep{amari2000methods}, so the geometry is spherical for $|\alpha|<1$, flat for $\alpha\pm1$ and hyperbolic for $|\alpha|>1$. With (\ref{equation:eguchi_primal}-\ref{equation:levi_civita}), the geometry is thus uniquely flat when $\alpha=\pm1$ and uniquely Riemannian when $\alpha=0$. For these special geometries, the $\alpha$-divergence corresponds to the KL-divergence ($\alpha=-1$), reverse KL-divergence ($\alpha=1$) and Hellinger divergence \citep{hellinger1909neue}  ($\alpha=0$).
Importantly, curvature depends on the space under consideration. For instance, the geometry is hyperbolic for $\alpha=0$ in the subspace of Gaussian probability densities, and the geometry is flat for any $\alpha\in\mathbb{R}$ in the ambient space of measures~\citep{amari2009alpha}. Closely related to this, $\alpha$-divergences satisfy the property that their gradients point along the geodesics of their induced geometry in the space of measures, a property which general $f$-divergences do not satisfy \citep[Proposition 2.12]{ay2017information}. Formalizing this, we say that a divergence is \textit{geodetic} if
\begin{equation}
    \grad \mathcal{D}(p\|\cdot)\big|_q = -\eta\left.\frac{d}{dt}{\gamma}_{q,p}(t)\right|_{t=0}
\end{equation}
with $\eta\in\mathbb{R}_{>0}$ and the $\alpha$-geodesic $\gamma_{q,p}$, connecting $q$ and $p$, induced by $\mathcal{D}$ in the space of measures. 
\begin{restatable}{lemma}{geodetic}
\label{lemma:geodetic}
The geodetic property of divergences is preserved by strictly monotonic functions.
\end{restatable}
The R\'enyi divergences \citep{renyi1961measures}
which are not $f$-divergences, are geodetic by a monotonic relation to $\alpha$-divergences through the Box-Cox transformation \citep{box1964analysis}. 
Monotonically related divergences $\mathcal{D}=F(\bar{\mathcal{D}})$ induce the same geometry (up to trivial scaling) through (\ref{equation:eguchi_primal}-\ref{equation:eguchi_dual}), and induce the same constants $\eta$ and $\alpha$ for the invariant tensors if the monotonic function $F$ additionally satisfies $F'(0)=1$. 
We now derive intrinsic rewards based on the geometric structures introduced here and show that artificial curiosity with these rewards unify foundational exploration approaches with the Amari-Čencov tensor's constant $\alpha$ as the sole degree of freedom.

\section{Invariant information rewards}
To formally ground artificial curiosity as rewarding information, we seek an intrinsic reward that depends on the information that the agent has about the environment and is agnostic to the representation of the information. Formalizing this, we now derive information rewards and prove that they are uniquely information monotonic and invariant under the agent-environment interaction.

\subsection{Measuring information}
In information theory, information is usually understood through Shannon entropy. For artificial curiosity, we want a notion of information that captures the information received by the agent when a state is observed, ideally without reference to the inner structure of the agent. A candidate function is Shannon information (surprisal) $I(s;p)\coloneqq-\log{p(s)}$ which can be interpreted as the surprise of observing a state $s$ given a subjective belief encoded by the probability density $p$. A geometric interpretation follows from the relation
\begin{equation}
    \underbrace{\int_\mathcal{S} p(s)I(s;p) dV(s)}_{=H(p)}-\log u = - \underbrace{\int_\mathcal{S} p(s)\log\frac{p(s)}{u}dV(s)}_{=\mathcal{D}_\text{KL}(p\|u)},
\end{equation}
where $u$ denotes the uniform probability density, such that Shannon information measures distinctness from maximal uncertainty along each state. Generalizing Shannon information to satisfy an analogous relationship with $f$-divergences, we define $f$-information as
\begin{equation}
    \label{eq:f_info}
    I_f(s;p) \coloneqq f\left[ \frac{1}{p(s)} \right]
\end{equation}
with $f$ a strictly concave function. $f$-Information has a similar geometric interpretation to Shannon information when $f(1)=0$ such that it induces an $f$-divergence $\mathcal{D}_{-f}$.  
The induced $f$-divergence is unchanged by transformations $f(x)\mapsto f(x)+c(x-1)$ with $c\in\mathbb{R}$. 
We say that a strictly concave function $f$ is geodetic if it induces a geodetic $f$-divergence.
It turns out that the functions $f_\alpha$, which induce the $\alpha$-divergences, are uniquely geodetic (c.f. \citet{amari2009alpha}):
\begin{restatable}{lemma}{alphainformationunique}
    \label{lemma:alphaunique}
    The family $\{f_\alpha:\alpha\in\mathbb{R}\} $ is uniquely geodetic, up to transformations $f_\alpha(x) \mapsto \eta \left\{ f_\alpha(x) + c(x-1) \right\}$ with $\eta,c\in\mathbb{R}$.
\end{restatable}
From these functions, we obtain $\alpha$-information
\begin{equation}
    I_\alpha(s;p) = \frac{4}{1-\alpha^2}\left(\Big[\frac{1}{p(s)}\Big]^\frac{\alpha+1}{2}-1 \right), \quad \alpha\neq1, 
\end{equation}
where, for notational simplicity we write $I_\alpha$ in place of $I_{f_\alpha}$. While $I_{\alpha\to 1}$ diverges, $I_{-1}\coloneqq I_{\alpha \to -1}$ converges to $-\log$. 
The distinctive geodetic property of $\alpha$‑information underlies a principled exploration-exploitation trade‑off, a fact that we will later prove. However, as we will now show, only strict concavity is required to ensure the invariance of $f$-information as an intrinsic reward in artificial curiosity---the fact which motivates our choice of definition.

\subsection{Artificial curiosity with information rewards}
The motivation that intrinsic rewards in artificial curiosity should depend on the information that the agent has about the environment, while remaining agnostic to the representation of the information, uniquely constrains intrinsic rewards to $f$-information relative to the occupancy (c.f. \cite{jiao2014information}):
\begin{restatable}{theorem}{uniquerewards} 
\label{theorem:unique_rewards}
Intrinsic rewards $\bar{r}(s;p) \coloneqq \bar{f}\left[ p(s) \right]$, with any function $\bar{f}:\mathbb{R}\rightarrow\mathbb{R}$ and any probability density $p$, are invariant under the agent-environment interaction $M$,
\begin{equation}
    \bar{r}(s;p) = \bar{r}(s;h_M(p)),
\end{equation}
uniquely when $p=p_\pi$. 
Furthermore, for intrinsic rewards $\bar{r}(s) \coloneqq \bar{r}(s;p_\pi)$, the intrinsic return
\begin{equation}
    \bar{R}(p_\pi) =\int_{\mathcal{S}^{n+1}} \sum_{i=0}^n \bar{r}(s_i) d\mu(s_0)\prod_{i=1}^n dM(s_{i-1},s_i)
\end{equation}
satisfies information monotonicity
\begin{equation}
    \bar{R}(p_\pi) \leq \bar{R}(h_\kappa(p_\pi)),
\end{equation}
with equality if and only if the statistic $\kappa$ is sufficient, uniquely when $\bar{r}(s)=I_f(s;p_\pi)$.
\end{restatable}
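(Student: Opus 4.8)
The plan is to treat the two assertions separately, since the first is a fixed-point uniqueness statement while the second is a data-processing inequality in disguise. For the first, I would expand $\bar r(s;p)=\bar r(s;h_M(p))$ into $\bar f[p(s)]=\bar f\big[h_M(p)(s)\big]$ for $V$-almost every $s$; a sufficient condition, valid for \emph{every} $\bar f$, is that $p$ be a fixed point of the interaction, $h_M(p)=p$ (reading $h_M$ with the episodic reset folded in, as in Theorem~\ref{theorem:ergodicity}), and by Theorem~\ref{theorem:ergodicity} the occupancy $p_\pi$ is the unique density with that property. The converse follows in the same breath: for any $\bar f$ injective on the relevant range the identity \emph{forces} $h_M(p)=p$, hence $p=p_\pi$. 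So this part is essentially an application of Theorem~\ref{theorem:ergodicity}, the only delicacy being the reset, which that theorem already packages.

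For the second claim I would first push the intrinsic return onto the occupancy: the computation behind Lemma~\ref{lemma:return_occupancy}, run verbatim with $r$ replaced by $\bar r$, gives $\bar R(p_\pi)=(n+1)\int_{\mathcal S}p_\pi(s)\,\bar f[p_\pi(s)]\,dV(s)$, and the same occupancy formula on $\mathcal Y$ gives $\bar R(h_\kappa(p_\pi))=(n+1)\int_{\mathcal Y}(h_\kappa p_\pi)(y)\,\bar f[(h_\kappa p_\pi)(y)]\,dW(y)$, where $W\coloneqq V\kappa^{-1}$ is the pushed-forward base measure and $h_\kappa(p_\pi)$ exists as a $W$-density because $P_\pi\ll V$ forces $P_\pi\kappa^{-1}\ll W$. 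Specializing to $\bar r(s)=I_f(s;p_\pi)$, that is $\bar f(t)=f(1/t)$ with $f$ strictly concave as in~\eqref{eq:f_info}, one rewrites $\int_{\mathcal S}p(s)\,\bar f[p(s)]\,dV(s)=\int_{\mathcal S}p(s)\,f[1/p(s)]\,dV(s)=-\mathcal D_{-f}(p\,\|\,\mathbf 1)$, with $\mathbf 1$ the density of $V$ with respect to itself and $-f$ strictly convex; since $h_\kappa(\mathbf 1)=\mathbf 1$ (the density of $W$ with respect to itself), the claim $\bar R(p_\pi)\le\bar R(h_\kappa(p_\pi))$ with equality exactly when $\kappa$ is sufficient is precisely the classical data-processing inequality $\mathcal D_{-f}(h_\kappa p_\pi\,\|\,h_\kappa\mathbf 1)\le\mathcal D_{-f}(p_\pi\,\|\,\mathbf 1)$ for the strictly convex generator $-f$, whose equality case is sufficiency of $\kappa$ for the pair $(p_\pi,\mathbf 1)$. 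Equivalently, $\bar R(p)=(n+1)\int_{\mathcal S}\psi(p)\,dV$ for the concave $\psi(t)\coloneqq t\,f(1/t)$, and the inequality is fiberwise Jensen along $\kappa$.

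For ``uniquely'' I would argue the converse: if $\bar r(s)=\bar f[p_\pi(s)]$ makes $\bar R$ satisfy the inequality \emph{and} the equality-iff-sufficiency condition for \emph{all} statistics $\kappa$, then $p\mapsto\int_{\mathcal S}p(s)\,g[1/p(s)]\,dV(s)$ with $g\coloneqq\bar f(1/\cdot)$ must be non-decreasing under every coarse-graining; testing against elementary two-cell merges (or invoking Csisz\'ar's axiomatic characterization of $f$-divergences through the data-processing axiom) forces $g$ to be strictly concave, so $\bar r(s)=g[1/p_\pi(s)]=I_g(s;p_\pi)$ is an $f$-information. I expect this converse to be the main obstacle: the forward direction is just ``generalized entropy cannot decrease under coarse-graining'', but extracting strict concavity of $\bar f(1/\cdot)$ from the monotonicity-plus-equality requirement, together with the measure-theoretic bookkeeping in full generality---non-probability base measures, existence and uniqueness of pushforward densities, the meaning of sufficiency of $\kappa$ for $(p_\pi,\mathbf 1)$ on second-countable spaces---is where the real care lies.
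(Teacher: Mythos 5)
Your proposal follows essentially the same route as the paper: the first part is the same appeal to Theorem~\ref{theorem:ergodicity}, and the second part reduces the intrinsic return to $(n+1)\int_{\mathcal S} p_\pi\,\bar f[p_\pi]\,dV$ via Lemma~\ref{lemma:return_occupancy} and then runs a fiberwise Jensen argument along the non-injective fibers of $\kappa$, with strict concavity of $f$ extracted from the equality-iff-sufficiency requirement. The only real difference is cosmetic: you package the forward inequality as the classical data-processing inequality for $\mathcal D_{-f}(\,\cdot\,\|\,\mathbf 1)$, whereas the paper works directly with Jensen on each merged fiber $\Phi$ and makes the sufficiency equivalence explicit through the factorization $p_\theta(s)=\omega(s)\,p'_\theta(\kappa(s))$ over arbitrary parameterizations (also invoking the $|\mathcal S|\ge 3$ caveat of Jiao et al., which your two-cell-merge test for the converse would need to respect).
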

We refer to intrinsic rewards of the form $I_f(\cdot,p_\pi)$ as $f$-information rewards or simply information rewards. Artificial curiosity with $f$-information rewards has the form
\begin{equation}
    \label{equation:freward}
    r(s) + \beta I_f(s;p_\pi), \quad \beta\in\mathbb{R}_{\geq 0}.
\end{equation}
Crucially, the degrees of freedom in designing intrinsic rewards have been constrained to a strictly concave function $f$. Moreover, this restriction arises uniquely from the motivation of rewarding information. Further restrictions on $f$ will unify foundational exploration approaches with the Amari-Čencov tensor's constant $\alpha\in\mathbb{R}$ serving as the sole degree of freedom.

\subsection{Unifying exploration through geometry}
The degrees of freedom in artificial curiosity with information rewards~\eqref{equation:freward} are the strictly concave function $f$ and the parameter $\beta\in\mathbb{R}_{\geq0}$, which governs the exploration-exploitation trade-off by yielding a distinct optimal occupancy for each value. A principled method to interpolate between these occupancies is along $\alpha$-geodesics, which are uniquely invariant under information-preserving maps (Theorem~\ref{theorem:cencov}). Since $\alpha$-information functions are directly linked to these geodesics through their unique geodetic property (Lemma~\ref{lemma:alphaunique}), $\alpha$-information rewards provide a principled exploration-exploitation trade-off.
Remarkably, artificial curiosity with $\alpha$-information rewards specializes to count-based and maximum entropy exploration for the uniquely Riemannian and flat geometries:
\begin{restatable}{theorem}{equivalencecountbased}
\label{theorem:equivalence_count_based}
Artificial curiosity with
\begin{equation}
    r(s) + \beta I_\alpha(s;p_\pi), \quad \alpha\in\mathbb{R},\;\beta\in\mathbb{R}_{\geq 0}
\end{equation}
is equivalent to count-based exploration for $\alpha=0$ and maximum entropy exploration for $\alpha=-1$.
\end{restatable}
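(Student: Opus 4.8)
The plan is to evaluate the $\alpha$-information reward at the two distinguished values and reduce each to a closed-form per-step intrinsic reward, then match it---up to a policy-independent additive constant and a positive rescaling of $\beta$---to the count-based bonus and to the maximum-entropy term respectively. Two elementary facts are used throughout: adding a policy-independent constant to every per-step reward shifts $R$ by a policy-independent constant, hence preserves $\argmax_{\pi\in\Pi} R$; and scaling the intrinsic reward by a positive constant is absorbed by $\beta$ because $\beta$ ranges over the ray $\mathbb{R}_{\geq 0}$. It is in these two senses that the augmented objectives are called \emph{equivalent}.

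For $\alpha=0$, substituting into $I_\alpha(s;p)=\tfrac{4}{1-\alpha^2}([1/p(s)]^{(\alpha+1)/2}-1)$ gives $I_0(s;p_\pi)=4\,p_\pi(s)^{-1/2}-4$, so artificial curiosity uses the per-step reward $r(s)-4\beta+4\beta\,p_\pi(s)^{-1/2}$. On a finite state space with $V$ the counting measure---the setting of count-based exploration---the occupancy of Theorem~\ref{theorem:ergodicity} is the normalized visitation count, $p_\pi(s)=n(s)/T$ with $T$ the (policy-independent) total number of visits, so $p_\pi(s)^{-1/2}=\sqrt{T}\,n(s)^{-1/2}$. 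Discarding the constant $-4\beta$ and setting $\beta'=4\sqrt{T}\,\beta$ turns the reward into $r(s)+\beta'\,n(s)^{-1/2}$, which is exactly count-based exploration.

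For $\alpha=-1$, we have $I_{-1}(s;p_\pi)=-\log p_\pi(s)$, so the intrinsic reward is $-\beta\log p_\pi(s)$. A per-step match is not immediate because maximum entropy exploration augments the \emph{return} by $\beta H(p_\pi)$ rather than adding a per-step reward; the bridge is the occupancy reformulation of Lemma~\ref{lemma:return_occupancy}, applied to the intrinsic return $\bar R(p_\pi)$ (which the same argument equally governs), giving $\bar R(p_\pi)=(n+1)\int_\mathcal{S} p_\pi(s)I_{-1}(s;p_\pi)\,dV(s)=(n+1)\int_\mathcal{S} p_\pi(s)\log\tfrac{1}{p_\pi(s)}\,dV(s)=(n+1)H(p_\pi)$. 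Hence the augmented objective $R(\pi)+\beta\bar R(p_\pi)$ equals $R(\pi)+\beta(n+1)H(p_\pi)$, which coincides with $R(\pi)+\beta' H(p_\pi)$ for $\beta'=(n+1)\beta$; since $\beta\mapsto(n+1)\beta$ is a bijection of $\mathbb{R}_{\geq 0}$, this is maximum entropy exploration.

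The main obstacle is pinning down the precise sense of ``equivalent,'' in particular the $\alpha=0$ identification of $p_\pi$ with the empirical counts $n(s)$: count-based exploration uses a \emph{running} count which, by Theorem~\ref{theorem:ergodicity}, converges to the occupancy, so the equivalence is cleanest for the idealized objective in which the bonus is evaluated at the stationary occupancy of the policy under optimization---as is implicit in the paper's treatment. One must also check that the rescaling and shift factors ($-4\beta$, $\sqrt{T}$, $n+1$) are genuinely policy-independent, which is immediate once the episode length $n$ is assumed independent of $\pi$ and $T$ is fixed by the sampling budget, so that they can be absorbed without altering the set of optimal policies.
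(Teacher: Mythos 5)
Your proposal is correct and follows essentially the same route as the paper: for $\alpha=0$ it substitutes the normalized visitation count for the occupancy on a finite state space and absorbs the constants $\sqrt{16(n+1)}$ and $-4$ into $\beta$ and the gradient respectively, and for $\alpha=-1$ it uses Lemma~\ref{lemma:return_occupancy} to rewrite the intrinsic return as $(n+1)H(p_\pi)$ and absorbs $(n+1)$ into $\beta$. Your explicit discussion of what ``equivalent'' means matches the paper's stated convention (same objectives under gradients and constant rescaling of $\beta$), so no gap remains.
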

Equivalence is meant in the sense of having the same optimization objectives under gradients and constant scaling of $\beta$. 
The relationship to R\'enyi state entropy exploration~\citep{yuan2022renyi}, Boltzmann exploration~\citep{thrun1992efficient} and classic artificial curiosity~\citep{schmidhuber1991possibility} is given in Appendix \ref{section:weaker}.
Strikingly, count-based and maximum entropy exploration are usually derived from completely different principles and frameworks, but coincide with special values of the Amari-Čencov tensor constant $\alpha$ (see Figure~\ref{figure}).
\begin{figure}[t]
    \centering
    \includegraphics[width=0.85\linewidth]{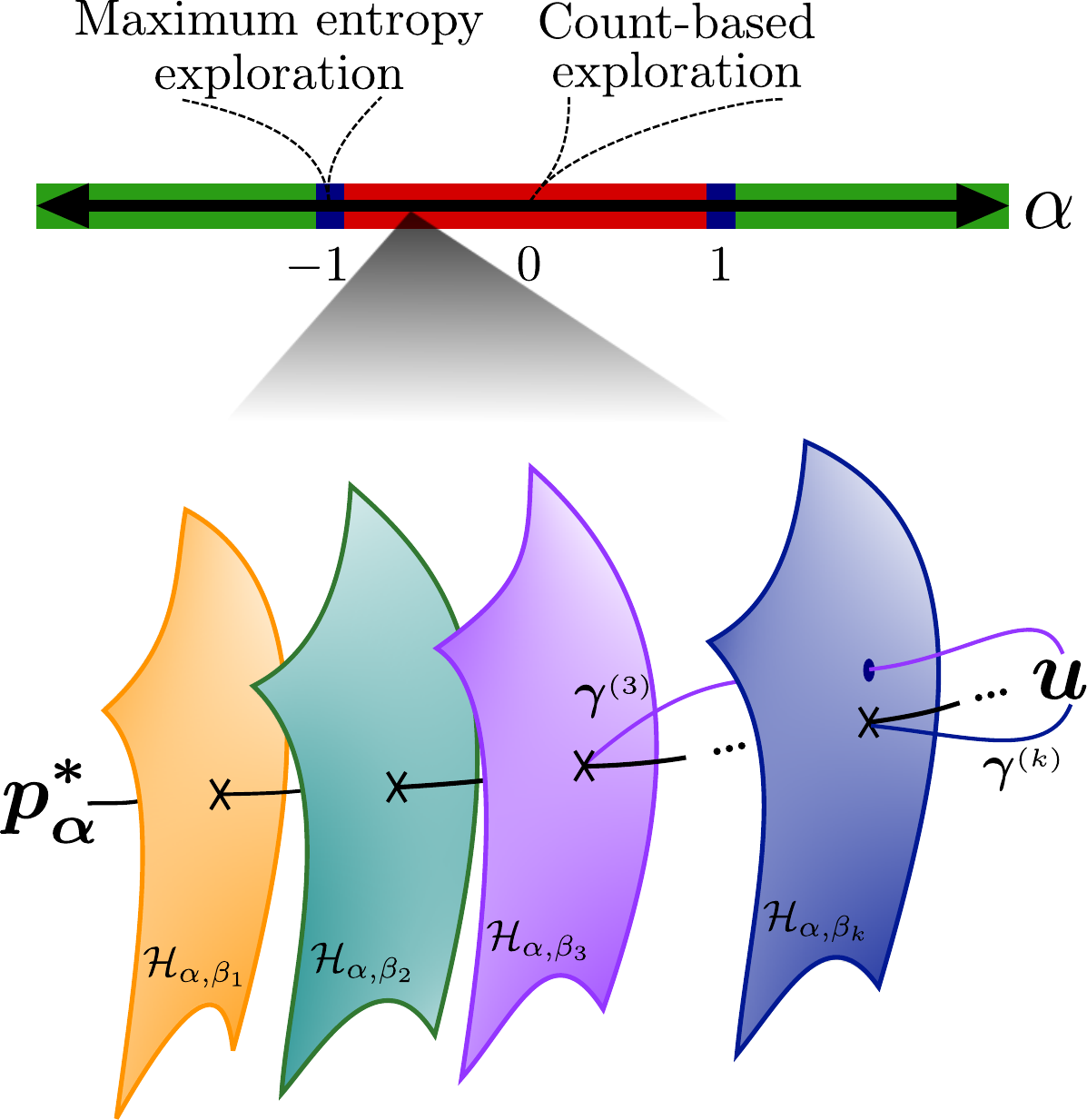}
    \caption{\textbf{Artificial curiosity with $\alpha$-information rewards on the curved occupancy manifold}. \textbf{(Top)}. The Amari-Čencov tensor constant $\alpha\in\mathbb{R}$ encodes the occupancy manifold curvature (red--spherical, blue--flat, green--hyperbolic). Count-based exploration corresponds to the Riemannian geometry with $\alpha=0$, and maximum entropy exploration to the flat geometry with $\alpha=-1$ (Theorem \ref{theorem:equivalence_count_based}). \textbf{(Bottom)} The intrinsic rewards scaling $\beta\in\mathbb{R}_{\geq 0}$ controls the exploration-exploitation trade-off on the curved occupancy manifold. The optima are $\alpha$-projections, along $(-\alpha)$-geodesics $\gamma^{(i)}$, from the uniform occupancy $u$ onto isoreturn hyperplanes $\mathcal{H}_{\alpha,\beta_i}$ (Proposition \ref{proposition:optima_projection}). The exploration-exploitation trade-off is $(\alpha+2)$-geodesic $\beta$-interpolation between the maximally rewarding occupancy $p_\alpha^*$ and the uniform occupancy $u$ (Theorem \ref{theorem:optima_geodesic}).}
    \label{figure}
\end{figure}
\section{Exploration on curved statistical manifolds} 
Artificial curiosity with $\alpha$-information rewards, exemplified by count-based and maximum entropy exploration, operates under the implicit assumption that the occupancy space constitutes a statistical manifold induced by the $\alpha$-divergence. An examination of exploration on this occupancy manifold reveals how these methods trade off exploration and exploitation in a principled manner. This geometric insight serves as the rationale for emphasizing $\alpha$-information rewards as a distinct and valuable subclass within the broader category of $f$-information rewards.
\subsection{A geometric exploration-exploitation trade-off}
\label{sec:geo_inf_Expl_TradeOff}
We now show that the optima of artificial curiosity with $\alpha$-information rewards uniquely trace geodesics on the occupancy manifold, reinforcing our claim that $\alpha$-information rewards yield a principled exploration-exploitation trade-off. Let us consider the optimization objective of artificial curiosity with information rewards:
\begin{equation}
\label{equation:augmented_return}
    \int_{\mathcal{S}^{n+1}} \sum_{i=0}^n \left\{ r(s_i)+\beta I_f(s_i;p_\pi) \right\} d\mu(s_0)\prod_{i=1}^n dM(s_{i-1},s_i).
\end{equation}
By Lemma \ref{lemma:return_occupancy}, we may consider the optimization objective as an equivalent function over the occupancy manifold: 
\begin{equation}
    R_{f,\beta}(p_\pi) 
    \coloneqq (n+1)\int_\mathcal{S}p_\pi(s)\left\{ r(s)+\beta I_f(s;p_\pi)\right\}dV(s).
\end{equation}The optima of artificial curiosity with $f$-information
\begin{equation}
    p_{f,\beta} \coloneqq \argmax_{p\in\mathcal{P}_\Pi}R_{f,\beta}(p)
\end{equation}
are points on the occupancy manifold. 
The set of occupancies $\mathcal{P}_\Pi\coloneqq\{p_\pi:\pi\in\Pi\}$ may not be the set of all probability densities $\mathcal{P}$, depending on the environment and policy class.
Because the return is linear in the occupancy, the set of occupancies where the same return is achieved, $\mathcal{H}(c) \coloneqq\{ p\in\mathcal{P}_\Pi:R(p)=c\}$ with $c\in\mathbb{R}$, is a hyperplane in the occupancy space. 
The optima of artificial curiosity with $\alpha$-information, $p_{\alpha,\beta}\coloneqq p_{f_\alpha,\beta}$, are the solutions to an $\alpha$-divergence minimization problem constrained to such a hyperplane:
\begin{restatable}{lemma}{optimadivergence}
\label{lemma:optima_divergence}
  If $P_\Pi$ is convex,
\begin{equation}
    p_{\alpha,\beta}=\argmin_{p\in\mathcal{H}_{\alpha,\beta}} \mathcal{D}_\alpha(p\|u),
\end{equation}
\end{restatable}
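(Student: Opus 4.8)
The plan is to recognize $R_{\alpha,\beta}(p)$ as an affine function of the return $R(p)$ plus a constant multiple of $\mathcal{D}_\alpha(p\|u)$, so that the constrained maximization collapses to a constrained divergence minimization. First I would expand the intrinsic part: since $I_\alpha(s;p) = \frac{4}{1-\alpha^2}\bigl([1/p(s)]^{\frac{\alpha+1}{2}}-1\bigr)$, we have
\begin{equation*}
    \int_\mathcal{S} p(s) I_\alpha(s;p)\, dV(s) = \frac{4}{1-\alpha^2}\left(\int_\mathcal{S} p(s)^{\frac{1-\alpha}{2}}\, dV(s) - 1\right).
\end{equation*}
Comparing with the definition of the $\alpha$-divergence against the uniform density $u$ (which, on a space of total $V$-measure one, has $u\equiv 1$, or more generally $u$ constant), $\mathcal{D}_\alpha(p\|u) = \frac{4}{1-\alpha^2}\bigl(1 - u^{\frac{\alpha+1}{2}}\int_\mathcal{S} p(s)^{\frac{1-\alpha}{2}}\, dV(s)\bigr)$, so up to an additive constant and a positive multiplicative constant (for $|\alpha|<1$; the sign and limiting cases $\alpha=\pm1$ need to be tracked separately), $\int p\, I_\alpha(\cdot;p)\, dV = -\,\text{const}\cdot \mathcal{D}_\alpha(p\|u) + \text{const}$. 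Hence $R_{\alpha,\beta}(p) = (n+1)R_{\text{ret}}(p) - (n+1)\beta\, c_\alpha\, \mathcal{D}_\alpha(p\|u) + \text{const}$ with $c_\alpha>0$ after absorbing signs, where $R_{\text{ret}}(p)=\int p(s)r(s)\,dV(s)$ denotes the (unscaled) return integrand.

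Next I would argue that maximizing this over all of $\mathcal{P}$ can be carried out by first fixing the value of the linear functional $R(p)=c$ and optimizing within the level set, then optimizing over $c$. On the level set $\{p : R(p)=c\}$ the first term is constant, so the maximizer of $R_{\alpha,\beta}$ restricted to that set is exactly $\argmin_{p:R(p)=c}\mathcal{D}_\alpha(p\|u)$. Writing $\mathcal{H}_{\alpha,\beta}\coloneqq \mathcal{H}(c^\star)$ for the particular level-set value $c^\star=c^\star(\alpha,\beta)$ attained by the global optimum $p_{\alpha,\beta}$, we get $p_{\alpha,\beta}=\argmin_{p\in\mathcal{H}_{\alpha,\beta}}\mathcal{D}_\alpha(p\|u)$, which is the claim. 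I would also note that this is consistent with the hyperplane $\mathcal{H}(c)$ being affine in $p$ (it is the intersection of the probability simplex with the affine constraint $R(p)=c$), and that the $\beta\to0$ limit recovers $p_\alpha^\star$ as the divergence-minimizer over the set of return-maximizing occupancies.

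The main obstacle is making the "optimize level-set-by-level-set" step rigorous: one must ensure the global maximum of $R_{\alpha,\beta}$ is actually attained (compactness/continuity of $p\mapsto\mathcal{D}_\alpha(p\|u)$ on $\mathcal{P}$, which holds by strict convexity of $\mathcal{D}_\alpha(\cdot\|u)$ and, for $|\alpha|<1$, boundedness), and that the decomposition $\max_p = \max_c \max_{p\in\mathcal{H}(c)}$ is valid — this is immediate set-theoretically since $\mathcal{P}=\bigsqcup_c \mathcal{H}(c)$. A secondary subtlety is the sign of $\frac{4}{1-\alpha^2}$ and the behavior at $\alpha=\pm1$: for $\alpha=-1$ one should check directly that $\int p\,I_{-1}(\cdot;p)\,dV = \int p\log p\,dV = -H(p) = \mathcal{D}_{\mathrm{KL}}(p\|u) - \log u$, so the identity $R_{\alpha,\beta}(p) = (n+1)R_{\text{ret}}(p) - (n+1)\beta\,\mathcal{D}_\alpha(p\|u) + \text{const}$ persists with $c_{-1}=1$, confirming the lemma at the boundary case by taking the limit $\alpha\to-1$. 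Strict convexity of $\mathcal{D}_\alpha(\cdot\|u)$ then gives uniqueness of $p_{\alpha,\beta}$ whenever $\beta>0$, and justifies writing $\argmin$ rather than merely $\in$.
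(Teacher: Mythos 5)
Your proposal is correct and follows essentially the same route as the paper: both identify the key identity $\int_{\mathcal S} p\, I_\alpha(\cdot;p)\, dV = -\mathcal D_\alpha(p\|u)$ (up to the constants you track) so that $R_{\alpha,\beta}$ becomes return minus a positive multiple of the divergence, and then pass to the constrained formulation on the isoreturn level set. Your explicit decomposition $\max_{p\in\mathcal P}=\max_c\max_{p\in\mathcal H(c)}$ is in fact a slightly more rigorous rendering of the paper's "interpret $\tfrac{1}{\beta(n+1)}$ as a Lagrange multiplier" step, so no substantive difference remains.
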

where the hyperplane $\mathcal{H}_{\alpha,\beta}\coloneqq\mathcal{H}(c_{\alpha,\beta})$ depends on $\alpha$ and $\beta$. The optima can now be understood in terms of pure geometry via geodesics:
\begin{restatable}{proposition}{optimaprojections}
    \label{proposition:optima_projection}
    If $P_\Pi$ is convex, the optima $p_{\alpha,\beta}$ are $\alpha$-projections from the uniform occupancy $u$ onto isoreturn hyperplanes $\mathcal{H}_{\alpha,\beta}$.
\end{restatable}
The $\alpha$-projections are the points on the hyperplanes where $(-\alpha)$-geodesics, connecting the uniform occupancy and the points, are orthogonal to $\alpha$-geodesics on the hyperplanes (see Figure~\ref{figure}).
Finding these points is equivalent to a generalized maximum entropy problem \citep{morales2021generalization} for which the solutions are curved exponential families:
\begin{restatable}{lemma}{optimaform}
\label{lemma:optima_form}
If $\mathcal{P}_\Pi=\mathcal{P}$,
\begin{equation}
  p_{\alpha,\beta} = \frac{(1+\frac{1}{\lambda\beta}r)^{-\frac{2}{\alpha+1}}}{\int (1+\frac{1}{\lambda\beta}r)^{-\frac{2}{\alpha+1}}dV}
\end{equation}
with $\lambda\in\mathbb{R}$ determined by $\alpha$ and $\beta$.
\end{restatable}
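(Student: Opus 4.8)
The plan is to reduce the statement to a constrained convex optimization solvable by a Lagrange/Euler–Lagrange argument, and then read off the curved‑exponential‑family form. By Lemma~\ref{lemma:optima_divergence} (equivalently, by the variational characterization behind Theorem~\ref{theorem:optima_projection}), $p_{\alpha,\beta}$ is the minimizer of $p\mapsto\mathcal{D}_\alpha(p\|u)$ over the affine set cut out by the normalization constraint $\int_\mathcal{S}p\,dV=1$ and the isoreturn constraint $\int_\mathcal{S}p\,r\,dV=\mathrm{const}$ defining $\mathcal{H}_{\alpha,\beta}$. Since $\mathcal{D}_\alpha(\cdot\|u)$ is strictly convex on the set of densities, this minimizer exists and is uniquely characterized by stationarity of the Lagrangian $\mathcal{D}_\alpha(p\|u)+\lambda_1(\int_\mathcal{S}p\,dV-1)+\lambda_2(\int_\mathcal{S}p\,r\,dV-\mathrm{const})$, with the positivity constraint inactive on the relevant parameter range. (Equivalently one may skip the hyperplane and directly extremize $R_{\alpha,\beta}(p)$ over all densities with a single multiplier for normalization, which is how I would actually organize the computation.)

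The computational core is a single variational derivative. Writing $u$ for the constant uniform density and using $\mathcal{D}_\alpha(p\|u)=\tfrac{4}{1-\alpha^2}\bigl(1-u^{\frac{\alpha+1}{2}}\int_\mathcal{S}p^{\frac{1-\alpha}{2}}\,dV\bigr)$, the functional derivative of the objective with respect to $p(s)$ is proportional to $p(s)^{-\frac{\alpha+1}{2}}$, the constant $\tfrac{4}{1-\alpha^2}\cdot\tfrac{1-\alpha}{2}=\tfrac{2}{\alpha+1}$ packaging the arithmetic. Setting this equal to the affine combination of constraint gradients and solving,
\begin{equation*}
  p(s)^{-\frac{\alpha+1}{2}}\propto\lambda_1+\lambda_2 r(s)\quad\Longrightarrow\quad p(s)\propto\bigl(\lambda_1+\lambda_2 r(s)\bigr)^{-\frac{2}{\alpha+1}}.
\end{equation*}
Factoring $\lambda_1$ out of the bracket, renaming $\lambda_2/\lambda_1=:\tfrac{1}{\lambda\beta}$ so that $\lambda$ absorbs the explicit $\beta$-dependence, and dividing by the normalizing integral yields exactly the claimed expression. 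Checking the degenerate cases closes this part: $\alpha=1$ is excluded (as $I_\alpha$ already diverges there), and $\alpha\to-1$, where $I_\alpha\to-\log$, recovers a Gibbs law $\propto e^{\mathrm{const}\cdot r}$ in the limit, consistent with Theorem~\ref{theorem:equivalence_count_based}; positivity of $p_{\alpha,\beta}$ follows because the base $1+\tfrac{1}{\lambda\beta}r$ stays positive for bounded $r$ on the admissible range of $\beta$.

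The main obstacle is the assertion that $\lambda\in\mathbb{C}$ rather than $\mathbb{R}$, together with making the infinite-dimensional variational step rigorous. The effective multiplier is pinned down by imposing the constraints: concretely, $\int_\mathcal{S}(1+\tfrac{1}{\lambda\beta}r)^{-\frac{2}{\alpha+1}}\,dV$ equals a known constant depending on $\lambda$, a single transcendental equation in $\lambda$ given $\alpha$ and $\beta$ which need not possess a real root. This is precisely the phenomenon, familiar from the generalized maximum entropy programs of \citet{morales2021generalization, morales2023geometric}, that normalizability of deformed-exponential (curved exponential) families can force the natural parameter into the complex domain. I would therefore either invoke the curved-exponential-family solution form from that line of work directly---having already identified our problem as such a program via Theorem~\ref{theorem:optima_projection}---or, for a self-contained argument, establish existence of a (possibly complex) root by a continuity/degree argument and verify a posteriori that the resulting $p_{\alpha,\beta}$ satisfies the constraints and coincides with the unique convex minimizer. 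The routine parts---strict convexity of $\mathcal{D}_\alpha(\cdot\|u)$, interchange of differentiation and integration, and the constant-chasing in the exponent---I would not belabor.
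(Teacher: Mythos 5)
Your proposal is correct and follows essentially the same route as the paper: starting from the variational characterization in Lemma~\ref{lemma:optima_divergence}, setting $\delta L=0$ for a Lagrangian combining $\mathcal{D}_\alpha(\cdot\|u)$ with the reward and normalization terms, and reading off $p\propto(\text{affine in }r)^{-\frac{2}{\alpha+1}}$ before normalizing (the paper uses a single multiplier for normalization, exactly the alternative organization you mention parenthetically). Your added care about when the multiplier equation admits only a complex root goes beyond the paper's proof, which simply asserts $\lambda\in\mathbb{C}$ without discussion.
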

A concrete example of this is given in Appendix \ref{section:toy}. Importantly for optimization theory, all states are visited from these optima if $\beta>0$. As $\alpha$ changes, the isoreturn hyperplane and $\alpha$-projection change smoothly (c.f. \citet{amari2001information}). As $\beta$ changes from $0$ to $\infty$, the isoreturn hyperplane changes smoothly, and the $\alpha$-projections trace out a geodesic:
\begin{restatable}{theorem}{optimageodesic}
\label{theorem:optima_geodesic}
    If $\mathcal{P}_\Pi=\mathcal{P}$ and $\beta\geq\beta_0>0$, the exploration-exploitation relation $\beta\mapsto p_{f,\beta}$ is an $(\alpha+2)$-geodesic connecting the maximally rewarding occupancy $p_{\alpha,\beta_0}$ and the uniform occupancy $u$, uniquely when $f=f_\alpha$.
\end{restatable}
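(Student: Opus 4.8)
The plan is to take the explicit closed form of the optima from Lemma~\ref{lemma:optima_form} and verify directly that the curve $\beta \mapsto p_{\alpha,\beta}$ satisfies the $(\alpha+2)$-geodesic equation, up to an affine reparametrization of $\beta$. First I would recall that in the space of positive measures (which contains the occupancy manifold after dropping normalization), the $\alpha$-geodesics have a well-known affine coordinate system: the $(\alpha+2)$-geodesics are straight lines in the coordinate $p \mapsto p^{-\frac{\alpha+1}{2}}$ (more precisely, the $\alpha'$-representation $\ell_{\alpha'}(p) \propto p^{\frac{1-\alpha'}{2}}$ is affine along $\alpha'$-geodesics; substituting $\alpha' = \alpha+2$ gives exponent $-\frac{\alpha+1}{2}$). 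From Lemma~\ref{lemma:optima_form}, the \emph{unnormalized} optimum is proportional to $\bigl(1 + \tfrac{1}{\lambda\beta} r\bigr)^{-\frac{2}{\alpha+1}}$, so raising it to the power $-\frac{\alpha+1}{2}$ yields exactly $1 + \tfrac{1}{\lambda\beta} r$, which is affine in the scalar $t \coloneqq \tfrac{1}{\lambda\beta}$. Hence in the $(\alpha+2)$-affine coordinate the curve is a straight line through the point corresponding to $t=0$, i.e. the constant function $1$, which is (proportional to) the uniform density $u$.

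The key steps, in order: (i) state the $\alpha'$-affine (mixture-type) coordinate $\ell_{\alpha'}$ on the space of positive measures and recall that $\alpha'$-geodesics are its preimages of straight lines — this is standard and can be cited from \cite{amari2000methods,ay2017information}; (ii) apply $\ell_{\alpha+2}$ to the unnormalized form in Lemma~\ref{lemma:optima_form} and check it equals $1 + t\,r$ with $t = \tfrac{1}{\lambda\beta}$, hence affine in $t$; (iii) handle the dependence of $\lambda$ on $\beta$: since $\lambda$ is only fixed by the normalization/isoreturn constraint, argue that the \emph{image} curve in the manifold is independent of how we parametrize it, so it suffices that $t$ ranges over an interval as $\beta$ ranges over $(0,\infty)$, making the image an $(\alpha+2)$-geodesic segment; (iv) identify the two endpoints: as $\beta \to \infty$ (so $t \to 0$) the form collapses to the uniform occupancy $u$, and as $\beta \to 0^+$ it concentrates on $\argmax r$, i.e. $p_\alpha^*$ by the definition given before the theorem; (v) note that the normalization (projection back from positive measures to probability densities) does not affect the geodesic claim, because $\alpha$-geodesics in $\mathcal{P}$ are precisely the normalized images of $\alpha$-geodesics in the cone of positive measures — again citable.

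The main obstacle I expect is step (iii): controlling the reparametrization $\beta \mapsto \lambda(\beta)$ rigorously. The constraint determining $\lambda$ (from $p_{\alpha,\beta} \in \mathcal{H}_{\alpha,\beta}$, equivalently from $\int p_{\alpha,\beta}\, dV = 1$ together with the isoreturn level $c_{\alpha,\beta}$) is implicit, $\lambda$ may be complex, and one must show the map $\beta \mapsto t = \tfrac{1}{\lambda\beta}$ is a genuine (monotone, continuous) reparametrization onto a real interval so that no part of the geodesic segment is traversed spuriously or skipped; the appearance of $\lambda \in \mathbb{C}$ in Lemma~\ref{lemma:optima_form} means I would need to argue that the physically admissible branch keeps $p_{\alpha,\beta}$ real and positive, pinning $t$ to a real interval. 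A secondary subtlety is the limiting case $\alpha = -1$, where the exponent $-\frac{2}{\alpha+1}$ and the coordinate $\ell_{\alpha+2}=\ell_1$ degenerate; there one should pass to the limit and recover that the trade-off curve is a $1$-geodesic (an exponential-family/e-geodesic line $p_\beta \propto u \cdot e^{t r}$), consistent with the maximum-entropy picture, and verify continuity of the whole family in $\alpha$.
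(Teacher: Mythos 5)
Your proposal is correct and follows essentially the same route as the paper: both arguments rest on the closed form of Lemma~\ref{lemma:optima_form} being affine in $1/(\lambda\beta)$ under the power map $p\mapsto p^{-\frac{\alpha+1}{2}}$ (the $(\alpha+2)$-affine representation), with endpoints identified as $u$ for $\beta\to\infty$ and $p_\alpha^*$ for $\beta\to 0$, and with $\alpha=-1$ treated as a limiting exponential-family case. The reparametrization issue you flag as the main obstacle in step (iii) is exactly what the paper handles, via an intermediate-value-theorem argument giving mutual set inclusions between the geodesic segment and $\{p_{\alpha,\beta}:\beta\in(\beta_0,\infty)\}$ followed by taking closures and letting $\beta_0\to 0$.
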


While this establishes the special role of $\alpha$-information rewards in the information-theoretic structure of the reinforcement learning problem, the result does not hold when states cannot be visited arbitrarily, i.e. $\mathcal{P}_\Pi\subset\mathcal{P}$.
Importantly for practice, the topology and geometry of the occupancy manifold determine whether optima can jump when lowering $\beta$:
\begin{restatable}{proposition}{optimapath}
  If $\mathcal{P}_\Pi$ is convex, the exploration-exploitation relation $\beta\mapsto p_{f,\beta}$ is a map. If additionally $\mathcal{P}_\Pi$ is compact, the map is continuous.
\end{restatable}
Since the observation function in partially observable Markov decision problems is a statistic, Theorem \ref{theorem:unique_rewards} shows that an upper bound on \eqref{equation:augmented_return} is optimized when using information rewards over observations instead of states. Therefore, Theorem \ref{theorem:optima_geodesic} holds for partially observable Markov decision problems if the observation function is sufficient, but not in general.

We have demonstrated that, under $\alpha$‑information rewards, artificial curiosity uniquely achieves a principled trade-off between exploration and exploitation: its optima coincide with $\alpha$‑projections, and the exploration-exploitation trade‑off is traced by ($\alpha+2$)‑geodesic $\beta$‑interpolation (see Figure~\ref{figure}). 
In the special case of maximum entropy exploration ($\alpha=-1$), the $\alpha$‑projection and $\beta$‑interpolation geodesics coincide. 
Moreover, these geometric insights extend beyond count‑based and maximum entropy exploration to R\'enyi state entropy exploration, once the trade‑off geodesic is reparameterized (see Appendix \ref{section:weaker}). Because the geodesic can be parameterized by R\'enyi entropy uniquely for $\alpha$-information rewards, it is only for these intrinsic rewards that $\beta$ has a natural information-theoretic interpretation as the \textit{information exploited by the optimal agent to maximize rewards}.
By Proposition \ref{proposition:optima_projection}, $\alpha$ has a geometric interpretation as the curvature of the occupancy manifold.
The importance of the occupancy manifold's curvature for exploration is evident from empirical performance gains when implicitly fine-tuning $\alpha$ to the environment \citep{yuan2022renyi}.
An interpretation based on $\alpha$-divergences is that negative $\alpha$ emphasizes the occupancy's peaks while positive $\alpha$ emphasizes the valleys, such that $\alpha$ controls a trade-off between penalizing high and low probability states (see Figure~\ref{figure:toy_solutions}), with these behaviors balanced by count-based exploration ($\alpha=0$). 
Discussion of the role of $\alpha$ for practical algorithms with density estimation can be found in Appendix \ref{section:algorithms}.

\section{Toy example}
\label{section:toy}
We here show a simple toy example to concretely instantiate concepts and illustrate the exploration-exploitation trade-off in artificial curiosity with $\alpha$-information rewards.
Consider a deterministic environment with two states and two actions, where the state switches when the second action is taken:
\begin{center}
	\begin{tikzpicture}[->, >=stealth', auto, semithick, node distance=3cm]
	\tikzstyle{every state}=[fill=white,draw=black,thick,text=black,scale=1]
	\node[state]    (A)               {$s_1$};
	\node[state]    (B)[right of=A]   {$s_2$};
	\path
	(A) edge[loop left]			node{$a_1$}	(A)
	(B) edge[loop right]		node{$a_1$}	(B)
	(A) edge[bend left,above]	node{$a_2$}	(B)
	(B) edge[bend left,below]	node{$a_2$}	(A);
	\end{tikzpicture}
\end{center}
Formally, we have a state space $\mathcal{S}=\{s_1,s_2\}$, action space $\mathcal{A}=\{a_1,a_2\}$ and transition probability 
\begin{equation}
  \delta(s_i, a_k, s_j) = \begin{cases}
    \mathbb{I}[i = j] & \text{ if } k = 1, \\
    \mathbb{I}[i \neq j] & \text{ if } k = 2,
  \end{cases}
\end{equation}
where $\mathbb{I}$ denotes the indicator function.
Suppose that only the second state is rewarding, $r(\{s_1,s_2\})=\{0,1\}$.
Let the episode length be infinite so that the occupancy does not depend on the starting state density.
Consider now a stochastic policy $\pi_\theta(s_i, a_1) = \theta_i$ parameterized by $\theta=\{ \theta_1, \theta_2 \}$ from the parameter space $\Theta=[0,1]^2$, such that the probability of switching from state $s_i$ is $\theta_i$.
Since the state and action spaces are finite, the agent-environment interaction
\begin{equation}
  M(s_i, \{s_j\})
  = \sum_{k=1}^{|\mathcal{A}|=2} \delta(s_i,a_k,s_j) \pi(s_i,a_k)
\end{equation}
is conveniently represented as the transition matrix
\begin{equation}
  M
  = \begin{bmatrix}
    1-\theta_1 & \theta_2 \\
    \theta_1 & 1-\theta_2
  \end{bmatrix}.
\end{equation}
Similarly, the occupancy can be represented as a column vector $p=\begin{bmatrix}p_1 &p_2\end{bmatrix}^\intercal$.
\begin{figure}
  \includegraphics[width=\linewidth]{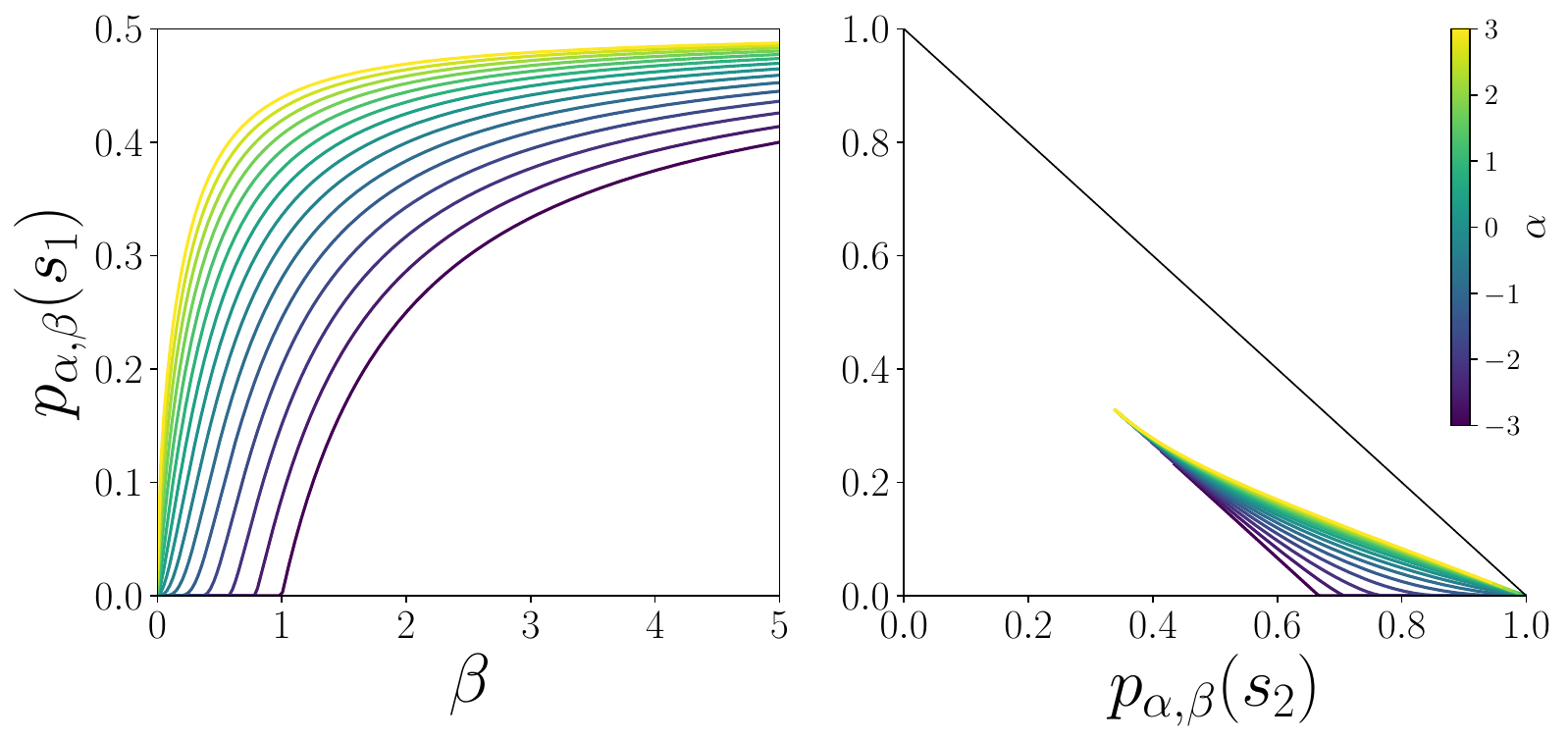}
  \caption{\textbf{Exploration-exploitation trade-off in artificial curiosity with $\alpha$-information rewards. (Left)}. The optimal occupancy $p_{\alpha,\beta}$ for different $\alpha$ and $\beta$ on a 2-state environment with reward function $r(\{s_1,s_2\})=\{0,1\}$. 
  When every occupancy is realizable, the optimal occupancies depend only on the state space and reward function.
   \textbf{(Right)}. The optimal occupancy for different $\alpha$ and $\beta$ on a 3-state environment with reward function $r(\{s_1,s_2,s_3\})=\{0,1,0.5\}$. The parameter $\beta$ controls interpolation between the uniform occupancy and the maximally rewarding occupancy, and the parameter $\alpha$ controls the balance between penalizing high and low probability states. Observe how lower $\alpha$ permits very low probability states, while higher $\alpha$ does not.}
  \label{figure:toy_solutions}
\end{figure}
From the unique invariance of the occupancy under the agent-environment interaction (Theorem \ref{theorem:ergodicity})
\begin{equation}
  \begin{bmatrix}
    1-\theta_1 & \theta_2 \\
    \theta_1 & 1-\theta_2
  \end{bmatrix}
\begin{bmatrix} p_1 \\ p_2 \end{bmatrix} =  \begin{bmatrix} p_1 \\ p_2 \end{bmatrix},
\end{equation}
we obtain a closed-form solution for the occupancy:
\begin{equation}
  p_i = \frac{\theta_j}{\theta_i + \theta_j}.
    \end{equation}
We note that it is unusual to have such a closed-form expression of the occupancy for practical problems.
By Lemma \ref{lemma:optima_form} and the particular reward function $r(\{s_1,s_2\})=\{0,1\}$, the probability of the first state at the optimal occupancy of artificial curiosity with $\alpha$-information rewards is given by
\begin{equation}
  p_{\alpha,\beta}(s_1) = \frac{1}{1 + (1 + \frac{1}{\lambda\beta})^{-\frac{2}{\alpha+1}}}
\end{equation}
with $\lambda\in\mathbb{R}$ determined by $\alpha$ and $\beta$. 
Note that the optimal occupancies $p_{\alpha,\beta}$ depend only on the state space and reward function if every occupancy is realizable. 
For the special case of $\alpha=-1$, we uniquely have a closed-form solution $p_{-1,\beta}(s_i) = e^{r(s_i) / \beta}$ by Lemma \ref{lemma:optima_divergence} and \citet{jaynes1957information}, such that we obtain a sigmoid function for the exploration-exploitation trade-off:
\begin{equation}
  p_{-1,\beta}(s_1) = \frac{1}{1 + e^{1/\beta}},
\end{equation}
By equating this to the occupancy, we see that optimal parameters for $\alpha=-1$ satisfy
\begin{equation}
\theta_1 = \frac{1+e^{1 / \beta}}{1+e^{-1 / \beta}} \theta_2.
\end{equation}
For $\alpha\neq-1$, we do not have a closed-form solution.
We used CVXPY \citep{diamond2016cvxpy,agrawal2019dgp} to compute the solutions and plotted them on the left in Figure~\ref{figure:toy_solutions}.
To illustrate the curvature of the occupancy space, encoded by $\alpha$, we extended the toy example to have a third state $s_3$ with reward $r(s_3)=0.5$ and plotted the optimal occupancies on the right in Figure~\ref{figure:toy_solutions}.

\section{Conclusion}
Information rewards---being the only intrinsic rewards that are information monotonic and invariant under agent-environment interactions---specialize artificial curiosity to strictly concave functions of the reciprocal occupancy. This invariance ensures that such rewards depend solely on the agent’s information about the environment, independent of any particular representation. In the special case of $\alpha$‑information, artificial curiosity uniquely attains a principled exploration-exploitation trade-off: its optima lie along the ($\alpha+2$)‑geodesic $\beta$‑interpolation on the occupancy manifold. Notably, the empirically successful count‑based ($\alpha=0$) and maximum entropy ($\alpha=-1$) exploration coincide with special values of the Amari-Čencov tensor's constant $\alpha$, highlighting how the curvature of the occupancy manifold governs effective exploration strategies. 
Furthermore, $\alpha=0$ and $\alpha=-1$ also suggest a broader interpretation within biological reinforcement learning, in which novelty and surprise can be understood as different points on a common $\alpha$-information spectrum (see Appendix ~\ref{appendix:novelty}).
Our findings unify foundational exploration approaches in a principled formalism and significantly constrain the design of intrinsic rewards for practical exploration in general state spaces.

\subsection{Refining the design of intrinsic rewards}
Deep reinforcement learning practitioners following our approach should use artificial curiosity with information rewards~\eqref{equation:freward}. Pseudocode for this is provided in Appendix~\ref{section:algorithms}. To apply information‑based intrinsic rewards in infinite state spaces, one must first estimate the occupancy density.  In practice, one may employ efficient non‑parametric estimators drawn from the maximum‑entropy exploration literature---e.g., kernel density estimation \citep{hazan2019provably}, k‑nearest‑neighbor density estimation~\citep{mutti2020policy} or k‑means density estimation~\citep{nedergaard2022kmeans}. Moreover, in many applications it is preferable to estimate occupancy over a transformed or lower‑dimensional representation of the state space. 
Although our present focus is on the formal structure of intrinsic rewards, efficient occupancy estimation and the choice of state space representation are essential for effective exploration in practice.  We therefore identify these topics---occupancy estimation and state space representation---as important avenues for future research.
The form of the strictly concave function $f$ remains unresolved and is primarily an empirical question to be investigated on meaningful reinforcement learning problems. 
The family $\left\{ f_\alpha : \alpha\in\mathbb{R} \right\}$, which induces $\alpha$-information, is promising as it uniquely yields a principled exploration-exploitation trade-off.
Furthermore, empirical studies have found superior performance on exploration problems when implicitly using $\alpha$-information rewards (see Appendix \ref{section:algorithms}).
Artificial curiosity with $\alpha$-information shares the same optima as R\'enyi state entropy exploration with the practical advantage of being computable locally in distributed reinforcement learning.
A promising future direction is practical algorithms using $\alpha$-information rewards with adaptive $\alpha$ and $\beta$.
In a separate publication, we will empirically investigate artificial curiosity with information rewards on robotics problems, including different methods for occupancy estimation and state space representation.
A thorough investigation of the effects of $f$, $\alpha$ and $\beta$ on optimization theory bounds, along with the deformation of statistical manifolds in reinforcement learning and scientific domains, has the potential to provide compelling theoretical justification for a specific information reward.

\section*{Acknowledgements}
P.A.M. acknowledges support by JSPS KAKENHI Grant Number 23K16855, 24K21518. A.N. acknowledges support by SNSF Grant Number 182539 (Neuromorphic Algorithms based on Relational Networks).
A.N. was a visiting researcher at ARAYA for a part of this work.

\bibliography{main}
\bibliographystyle{abbrvnat}

\clearpage
\appendix
\thispagestyle{empty}

\onecolumn
\aistatstitle{An Information-Geometric Approach to Artificial Curiosity: \\ Supplementary Materials}
\section{Practical algorithms}
\label{section:algorithms}
The theory of information rewards leads to a class of practical exploration algorithms with plug-and-play functionality for reinforcement learning and density estimation methods. In finite state spaces, occupancy estimation simply amounts to normalized state counts, while the occupancy must be efficiently estimated using non-parametric density estimation methods when the state space is infinite.
\begin{algorithm}
\caption{Reinforcement learning using artificial curiosity with information rewards.}
\begin{algorithmic}
\Hyperparameters
  \State information function $f$ (strictly concave)
  \State intrinsic reward scaling $\beta$ (non-negative)
\EndHyperparameters
\Variables
  \State transition function $\delta$
  \State starting state density $\mu$
  \State policy $\pi$
  \State occupancy estimate $\hat{p}_\pi$
\EndVariables
\For{\texttt{iteration}}
\State state $s \sim \mu$
\State trajectory $\tau \gets (s)$
\For{\texttt{step}}
  \State action $a \sim \pi(s)$
  \State next state $s' \sim \delta(s,a)$
  \State extrinsic reward $r_e \sim r(s)$
  \State intrinsic reward $r_i \gets f(\frac{1}{\hat{p}_\pi(s)})$
  \State $\tau \gets \tau.(a,r_e+\beta r_i,s')$
  \State $s \gets s'$
\EndFor
\State $\pi \gets \text{UpdatePolicy}(\pi, \tau)$
\State $\hat{p}_\pi \gets \text{UpdateDensityEstimate}(\hat{p}_\pi,\tau)$
\EndFor
\end{algorithmic}
\end{algorithm}
For the $k$-nearest neighbor \citep{mutti2020policy,liu2021behavior} and $k$-means \citep{nedergaard2022kmeans} density estimators from the maximum entropy exploration literature, the density estimate in a $n$-dimensional state space has the form
\begin{equation}
  \hat{p}_\pi(s) \equiv \frac{1}{m(s)^n}
\end{equation}
where $m(s)\in\mathbb{R}_{\geq 0}$ and $\equiv$ denotes equivalence up to affine transformations (such transformations $f(x)\mapsto \eta f(x)+c$ with $\eta,c\in\mathbb{R}$ do not affect policy gradients except for scaling them by $\eta$).
Using such a density estimate, information rewards have the form $I_f(s;\hat{p}_\pi)\equiv f\left[m(s)^n\right]$ with $f$ strictly concave. For maximum entropy exploration ($f=\log$), this leads to numerical instabilities when $m(s)$ is close to zero. To mitigate this, \cite{liu2021behavior} implicitly used $f(x)=\log(x+1)$ as a hack. However, this is a strictly concave function, so the hack is information-theoretically justified by our theory.
Similarly, \cite{nedergaard2022kmeans} reported best performance when implicitly using $f_\alpha$ with $\alpha=\frac{1}{n}-1$ through $\sqrt{m(s)}$, since
\begin{equation}
  I_\alpha(s;\hat{p}_\pi) \equiv \frac{4}{1-\alpha^2}(\left[ m(s)^n \right]^\frac{\alpha+1}{2}-1)
  \equiv \sqrt{m(s)}
\end{equation}
The theory of information rewards provides rigorous foundations for these empirically vital deviations from maximum entropy exploration theory.
When maximizing the Renyi entropy of the occupancy, \citet{yuan2022renyi} found best performance when implicitly fine-tuning $\alpha$ for each environment. Information rewards provide an efficient method of achieving such performance gains, since $\alpha$-information rewards have equivalent gradients but are computable in a distributed manner.
An interesting direction of future research are algorithms that use $\alpha$-information rewards with adaptive $\alpha$ and $\beta$, for example meta learning algorithms.

\section{Relation to non-equivalent methods}
\label{section:weaker}
Artificial curiosity with $\alpha$-information rewards is closely related to Rényi maximum entropy exploration \citep{yuan2022renyi}, Boltzmann exploration \citep{thrun1992efficient} and classic artificial curiosity \citep{schmidhuber1991possibility}. 
For Rényi maximum entropy exploration, which uses Rényi entropy
instead of Shannon entropy in~\eqref{equation:maximum_entropy},
we have:
\begin{restatable}{proposition}{equivalencerenyi}
  \label{proposition:equivalence_renyi_exploration}
  Artificial curiosity with $\alpha$-information is equivalent to Rényi maximum entropy exploration, under non-constant scaling of $\beta$.
\end{restatable}

From Lemma \ref{lemma:optima_form}, the optima of information rewards with $\alpha=-1$ are equivalent to Boltzmann exploration:
\begin{equation}
    \label{equation:boltzmann_exploration}
  \lim_{\alpha\rightarrow-1} p_{\alpha,\beta}(s) = \frac{e^\frac{r(s)}{\beta}}{ \int_\mathcal{S} e^\frac{r}{\beta} dV}
\end{equation}
The optima of Rényi maximum entropy exploration are similarly equivalent by Proposition \ref{proposition:equivalence_renyi_exploration}.

By extending $\alpha$-information rewards with a discrete parameter, it is possible to include the classic artificial curiosity of \citet{schmidhuber1991possibility}. 
In particular, consider artificial curiosity with $\alpha$-information relative to the $k$-step agent-environment interaction
$M^k$ from some state $s_0$: 
\begin{equation}
  \label{equation:generalized_alpha_curiosity}
  r(s) + \beta I_\alpha(s;M^k\left[s_0;\cdot\right]), \quad \alpha\in\mathbb{R}, \; \beta\in\mathbb{R}_{\geq0}, \; k\in\mathbb{Z}_+, \; s_0 \in\mathcal{S}.
\end{equation}
By definition of the occupancy, we have
\begin{equation}
  \lim_{k\rightarrow\infty} I_\alpha(s;M^k\left[s_0;\cdot\right]) = I_\alpha(s;p_\pi)
\end{equation}
independent of $s_0$, such that \eqref{equation:generalized_alpha_curiosity} contains count-based $(k=\infty,\alpha=0)$ and maximum entropy exploration ($k=\infty,\alpha=-1$) by Theorem \ref{theorem:equivalence_count_based}. 
Furthermore, we have
\begin{restatable}{proposition}{equivalenceclassic}
  \label{proposition:classic_artificial_curiosity}
  If the agent-environment interaction is a unit variance Gaussian with mean $f_\theta(s_0)$, \eqref{equation:generalized_alpha_curiosity} is equivalent to classic artificial curiosity when $k=1$ and $\alpha=-1$.
\end{restatable}
Count-based and maximum entropy exploration tend to empirically outperform classic artificial curiosity, suggesting that agent-environment interaction invariance is associated with superior empirical performance.

\section{Novelty and surprise are on a spectrum}
\label{appendix:novelty}
Because information-processing in brains and machines are bound by the same mathematical structure, including the unique information-theoretic role of $\alpha$-information, our results have implications for neuroscience. The firing rates of dopamine neurons in the primate ventral tegmental area are correlated with reward prediction errors \citep{schultz1997neural}, but also with novel and surprising stimuli \citep{kakade2002dopamine}, suggestive of a neurobiological basis of curiosity in mammals. Several authors have argued for a functional and mechanistic distinction between dopaminergic novelty and surprise rewards \citep{barto2013novelty,xu2021novelty,modirshanechi2023surprise}. However, our results suggest viewing novelty and surprise as $\alpha$-information on a continuous spectrum determined by $\alpha$. In particular, $0$-information of neural populations representing generative models corresponds to novelty, and $(-1)$-information of neural populations representing predictive models corresponds to surprise (if the model is Gaussian, prediction errors). Intriguingly, empirical evidence suggests that human exploratory behavior is best explained by novelty rewards \citep{xu2021novelty}. Experimental investigations are warranted on whether $\alpha$-information rewards explain biological curiosity data better than other theories, and if so, whether inferred parameters for $\alpha$ and $\beta$ are consistent with empirical and theoretical results from reinforcement learning.

\section{Proofs}
\ergodicity*
\begin{proof}
This result is well-known, but we did not find a proof for general state spaces in the literature.
We first construct a time-inhomogeneous Markov kernel equivalent to $M_t$ and show that it forms a Markov chain. To do this, we use a counter set $I\coloneqq\{0,...,n\}$ and define:
\begin{equation}
   \tilde{\mathcal{S}} \coloneqq \mathcal{S} \times I
\end{equation}
\begin{equation}
  \tilde{\mu}(B) \coloneqq (\mu(B), 0)
\end{equation}
\begin{equation}
    \label{equation:transition_kernel}
    \tilde{M}(s,B) \coloneqq 
    \left\{
    \begin{aligned}
        & (\tilde{\mu}(B), 0) & \text{if $c(s) = n$,} \\
        & (M(s,B), c(s) + 1) & \text{else,}
    \end{aligned} 
    \right.
\end{equation} 
where $c(s)$ gives the counter and $B\in\tilde{B}$. To ensure that $\tilde{M}$ forms a Markov chain given that $M$ does, we need to handle a topological technicality: $\tilde{B}$, the Borel $\sigma$-algebra of $\tilde{S}$, must be countably generated. 
Let $I$ have any topology and let $\tilde{\mathcal{S}}$ have the product topology. If $M$ formed a Markov chain, $\mathcal{S}$ must have been second-countable. Since $I$ is second-countable by finiteness, and the property is preserved by the product topology, $\tilde{\mathcal{S}}$ is second-countable. $\tilde{\mathcal{B}}$ is then countably generated and $\tilde{M}$ forms a Markov chain.

We now construct $P_\pi$ and show that it is the unique invariant probability measure for the Markov chain.
Let
\begin{equation}
    \label{equation:ergodicity_avoidance}
    \tilde{M}_\mathrm{X}^k(s,B) \coloneqq \int_{\mathcal{S}\setminus\mathrm{X}} \tilde{M}_\mathrm{X}^{k-1}(s,s')\tilde{M}(s',B) dV(s'), \quad \tilde{M}_\mathrm{X}^1(s,B)\coloneqq \tilde{M}(s,B)
\end{equation}
denote the probability of visiting $B\in\tilde{\mathcal{B}}$ from $s\in\mathcal{S}$ in $k$ steps without passing through $\mathrm{X}\subseteq\tilde{\mathcal{B}}$. Let 
\begin{equation}
    \Xi\coloneqq\{ s \in \tilde{\mathcal{S}} : c(s) = n\}
\end{equation} 
denote the terminal states. Now, define the expected number of visits to $B$ without passing through a terminal state:
\begin{equation}
    \tilde{P}_\pi(B) \coloneqq \int_{\tilde{\mathcal{S}}} \sum_{k=0}^\infty \tilde{M}_\Xi^k (s,B) d\mu(s)
\end{equation}
Since we are guaranteed to visit $\Xi$ in exactly $n$ steps from a starting state, we have
\begin{equations}
    \tilde{P}_\pi(B) &= \int_{\tilde{\mathcal{S}}} \Big( \sum_{k=0}^{n} \tilde{M}_\Xi^k (s,B) + \sum_{k=n+1}^\infty \tilde{M}_\Xi^k (s,B)\Big) d\mu(s) \\
    \label{equation:ergodicity_finite}
    &= \int_{\tilde{\mathcal{S}}} \sum_{k=0}^{n} \tilde{M} (s,B) d\mu(s) \ge 0
\end{equations}
since only the first $n$-terms of the sum contribute. We can normalize $\tilde{P}_{\pi}$ to get a probability measure:
\begin{equations}
    P_\pi(B) &= \frac{\tilde{P}_\pi(B)}{\tilde{P}_\pi(\mathcal{S})} = \frac{\int_{\tilde{\mathcal{S}}} \sum_{k=0}^{n} \tilde{M}^k (s, B) d\tilde{\mu}(s)}{\int_{\tilde{\mathcal{S}}} \sum_{k=0}^{n} \tilde{M}^k (s, \mathcal{S}) d\tilde{\mu}(s)} \\
    \label{equation:ergodicity_inhomogenous}
    &= \frac{1}{n+1}\int_{\tilde{\mathcal{S}}} \sum_{k=0}^{n} \tilde{M}^k (s, B) d\tilde{\mu}(s)
\end{equations}
$\tilde{P}_\pi$-irreducibility and recurrence of the Markov chain implies the existence of a unique, up to constant multiplies, invariant measure equivalent to $\tilde{P}_\pi$ \citep[Theorem 10.4.9]{meyn2012markov}. Therefore, $P_\pi$ is the unique invariant probability measure if the chain is $\tilde{P}_\pi$-irreducible and recurrent. 
Aperiodicity is not necessary because we have already constructed the measure as a finite sum. By \citet[Proposition 4.2.1]{meyn2012markov}, $\tilde{P}_\pi$-irreducibility and recurrence follows from
\begin{equation}
    \forall s \in \mathcal{S}. \; \forall B \in \tilde{\mathcal{B}}. \; \tilde{P}_\pi(B) > 0 \Rightarrow \sum_{k=1}^\infty \tilde{M}^k(s,B) = \infty \
\end{equation}
Since the state resets every $n+1$ steps, we have for any $k$ and $m\in\mathbb{N}$,
\begin{equation}
    \label{equation:ergodicity_modulo}
    \tilde{M}^k(s,B) = \tilde{M}^{k+m(n+1)}(s,B)
\end{equation}
Letting $l \coloneqq n - c(s)$ denote the number of steps until the next state reset, we furthermore have for $k > l$,
\begin{equations}
    \tilde{M}^k(s,B) 
    &= \int_S \tilde{M}^{k-(l+1)}(s',B) d\tilde{M}^{l+1}(s,s') \\
    \label{equation:ergodicity_reset}
    &= \int_S \tilde{M}^{k-(l+1)}(s',B) d\tilde{\mu}(s')
\end{equations}
since a state reset happens after $l$ steps. Then,
\begin{equation} 
    \sum_{k=1}^\infty \tilde{M}^k(s,B) 
    = \underbrace{\sum_{k=1}^{l+1} \tilde{M}^k(s,B)}_{\geq 0} + \sum_{k=l+1}^\infty \tilde{M}^k(s,B) \geq \sum_{k=l+1}^\infty \tilde{M}^k(s,B),
\end{equation}
as a bound from below,
\begin{equations}
    \sum_{k=l+1}^\infty \tilde{M}^k(s,B) & \overset{(\ref{equation:ergodicity_reset})}{=} \sum_{k=l+1}^\infty \int_{\tilde{\mathcal{S}}} \tilde{M}^{k-(l+1)}(s',B) d\tilde{\mu}(s') \\ 
    & = \sum_{k=0}^\infty \int_{\tilde{\mathcal{S}}}  \tilde{M}^k(s',B) d\tilde{\mu}(s') \\  
    & = \sum_{m=0}^\infty \sum_{k=0}^n \int_{\tilde{\mathcal{S}}}  \tilde{M}^{k+m(n+1)}(s',B) d\tilde{\mu}(s') \\  
    & \overset{(\ref{equation:ergodicity_modulo})}{=} \sum_{m=0}^\infty \sum_{k=0}^n \int_{\tilde{\mathcal{S}}}  \tilde{M}^{k}(s',B) d\tilde{\mu}(s') \\  
    & = \lim_{m\rightarrow\infty} m \int_{\tilde{\mathcal{S}}} \sum_{k=0}^{n} \tilde{M}^k(s',B) d\tilde{\mu}(s') \\  
    & \overset{(\ref{equation:ergodicity_finite})}{=} \lim_{m\rightarrow\infty} m \tilde{P}_\pi(B) = \begin{cases}
    \infty & \text{ if } \tilde{P}_\pi(B) > 0, \\
    0 & \text{ else.}
    \end{cases} \label{equation:ergodicity_irreducibility}
\end{equations} 
The Markov chain is then $\tilde{P}_\pi$-irreducible and recurrent, and $P_\pi$ is the unique invariant probability measure under $\tilde{M}$. Since $M=\tilde{M}$ when $c(s)<n$, and $c(s)=0$ for any $s\in\support (\tilde{\mu})$, $P_\pi$ is invariant under $M$ and it follows from  (\ref{equation:ergodicity_inhomogenous}) that
\begin{equation}
  P_\pi(B) = \frac{1}{n+1}\int_{\mathcal{S}} \sum_{k=0}^{n} M^k (s, B) d\mu(s).
\end{equation}
\end{proof}
\returnoccupancy*
\begin{proof}
The proof is due to \citep{bojun2020steady}.
Define the state-action value
\begin{equation}
    Q(s,a) \coloneqq \int_{\mathcal{S}^{n-k}} \sum_{i=k}^n r(s_i) d\delta(s,a,s_k)dM(s_k,s_{k+1})\cdots dM(s_{n-1},s_n)
\end{equation}
where the notation $\int_{\mathcal{S}^{n-k}} \coloneqq \prod^{n-k} \int_\mathcal{S}$ had been adopted. $Q(s,a)$ satisfies,
\begin{equations}
    R(\pi) &= \int_\mathcal{S} \left\{ r(s) + \int_\mathcal{A} Q(s,a) d\pi(s,a)\right\} d\mu(s) \\
    \label{equation:return_occupancy_return}
    &=  \int_\mathcal{A} Q(\xi,a) d\pi(\xi,a)
\end{equations}
for any terminal state $\xi\in\Xi$.
The Bellman equation \citep{bellman1954theory} shows that
\begin{equation} \label{equation:bellman}
    Q(s,a) = \int_\mathcal{S} \left\{ r(s') +  \gamma(s') \int_\mathcal{A} Q(s',a') d\pi(s',a')  \right\} d\delta(s,a,s')
\end{equation}
where
\begin{equation}
    \gamma(s) \coloneqq
    \begin{cases}
        1 & \text{ if $s \notin \Xi$,} \\
        0 & \text{ else.}
    \end{cases}
\end{equation}
Integrating both sides of~\eqref{equation:bellman} over $dP_\pi$ and $d\pi$, we have
\begin{equations}
    \int_{\mathcal{S}\times \mathcal{A}} Q(s,a) d\pi(s,a) dP_\pi(s) 
    &= \int_{\mathcal{A}\times \mathcal{S}^2} \left\{ r(s') + \gamma(s') \int_\mathcal{A} Q(s',a') d\pi(s',a') \right\} \nonumber \\
    &\qquad \qquad \;\; d\delta(s,a,s') dP_\pi(s) d\pi(s,a) \\  
    &= \int_{\mathcal{S}^2} \left\{ r(s') + \gamma(s') \int_\mathcal{A} Q(s',a') d\pi(s',a') \right\} \nonumber \\
    &\qquad \quad \;\, dM(s,s') dP_\pi(s) \\
    &= \int_\mathcal{S} \left\{ r(s) + \gamma(s) \int_\mathcal{A} Q(s,a) d\pi(s,a)\right\} dP_\pi(s) \\
    &= \int_\mathcal{S} r(s) dP_\pi(s) + \int_\mathcal{S} \gamma(s) \int_\mathcal{A} Q(s,a) d\pi(s,a) dP_\pi(s)
\end{equations} 
where we used the invariance of $P_\pi$ by Theorem \ref{theorem:ergodicity} in the second-last line. Subtracting the second term on the right-hand side from both sides,
\begin{equations}
    \int_\mathcal{S} r(s) dP_\pi(s)
    &= \int_\mathcal{S} \left\{ [1-\gamma(s)] \int_\mathcal{A} Q(s,a) d\pi(s,a) \right \}dP_\pi(s) \\
    &= \int_{\Xi \times \mathcal{A}} Q(s,a) d\pi(s,a) dP_\pi(s) \\
    &\overset{(\ref{equation:return_occupancy_return})}{=} R(\pi)\int_\Xi dP_\pi(s) = \frac{1}{n+1}R(\pi)
\end{equations}
this finally implies that
\begin{equations}
  R(\pi) &= (n+1) \int_\mathcal{S} r(s) dP_\pi(s) \\
  &= (n+1) \int_\mathcal{S} p_\pi(s) r(s) dV(s).
\end{equations}
\end{proof}

\geodetic*
\begin{proof}
Let $\bar{\mathcal{D}}$ be a geodetic divergence and $\mathcal{D}=F(\bar{\mathcal{D}})$ with $F$ strictly monotonic function.
Using the shorthand $\mathcal{D}_p \coloneqq \mathcal{D}(p\|\cdot)$, letting $g$ denote the Fisher-Rao tensor, and letting $\gamma$ denote a unit speed path with arbitrary endpoints $p$ and $q$,
\begin{align}
    \bar{\mathcal{D}}_p(q) &= \bar{\mathcal{D}}_p(q) - \underbrace{\bar{\mathcal{D}}_p(p)}_{=0} \nonumber \\
    &= \int_\gamma d\bar{\mathcal{D}}_p =\int_\gamma g_{\gamma}(\grad \bar{\mathcal{D}}_p, \gamma')
\end{align}
and
\begin{align}
    \bar{\mathcal{D}}_p(q) 
    &= \int_\gamma d\bar{\mathcal{D}}_p 
    = \int_\gamma d(F(\mathcal{D}_p)) 
    = \int_\gamma F'(\mathcal{D}_p)d\mathcal{D}_p \nonumber \\
    &= \int_\gamma F'(\mathcal{D}_p) g_{\gamma}(\grad \mathcal{D}_p, \gamma') 
    = \int_\gamma g_{\gamma}(F'(\mathcal{D}_p) \grad \mathcal{D}_p, \gamma')
\end{align}
where $\int_\gamma$ denotes a line integral. Since it then holds for any unit speed path $\gamma$ with arbitrary endpoints $p$ and $q$ that
\begin{equation}
    \int_\gamma g_{\gamma}(\grad \bar{\mathcal{D}}_p, \gamma') = \int_\gamma g_{\gamma}(F'(\mathcal{D}_p) \grad \mathcal{D}_p, \gamma')
\end{equation}
it must be that
\begin{equation}
    \grad \bar{\mathcal{D}}(p,\cdot) = F'(\mathcal{D}(p,\cdot)) \grad \mathcal{D}(p,\cdot)
\end{equation}
for any $p$. Since additionally $F'(\mathcal{D}(p\|\cdot))>0$ by strict monotonicity of $F$, $\mathcal{D}$ must be geodetic if $\bar{\mathcal{D}}$ is.
\end{proof}

\alphainformationunique*
\begin{proof}
Recall that a function $f$ is geodetic if its induced divergence $\mathcal{D}_f$ satisfies:
\begin{equation}
  \grad \mathcal{D}_f(p \|\cdot) \big|_q = -\eta\left.\frac{d}{dt}\gamma(t) \right|_{t=0}
\end{equation}
where $\eta\in\mathbb{R}_{>0}$ and $\gamma$ is the primal geodesic, connecting $q$ and $p$, induced by $\mathcal{D}_f$ in the space of measures.
By \citet[Equation 2.59]{ay2017information}, $\gamma$ is given by
\begin{equation}
    \gamma(t) = \left\{ (1-t) q^{\frac{1-\alpha}{2}} + t p^{\frac{1-\alpha}{2}} \right\}^{\frac{2}{1-\alpha}}
\end{equation}
such that
\begin{equations}
    \frac{d}{dt}\gamma(t) \Big|_{t=0} 
    &= \frac{2}{1-\alpha} \left[ (1-t)q^\frac{1-\alpha}{2} + tp^\frac{1-\alpha}{2} \right]^\frac{1+\alpha}{1-\alpha}(p^\frac{1-\alpha}{2} - q^\frac{1-\alpha}{2}) \Big|_{t=0} \\
    &= \frac{2}{1-\alpha}q^\frac{1+\alpha}{2}(p^\frac{1-\alpha}{2} - q^\frac{1-\alpha}{2}) \\
    &= \frac{2}{1-\alpha}q\bigg(\left[ \frac{p}{q} \right]^\frac{1-\alpha}{2}-1\bigg)
\end{equations}
We proceed by equating this to the gradient of the $f$-divergence and showing that this uniquely constrains $f$.
Some technical machinery is required to handle the gradient on a statistical manifold when the underlying state space is infinite. We follow \citet{ay2017information} and consider a 2-integrable parameterized measure model $(\Theta,\mathcal{S},\mathbf{p})$ with a smooth map $\mathbf{p}: \Theta \rightarrow\mathcal{M}$ between the Banach manifolds of parameters $\Theta$ and measures $\mathcal{M}$ over the state space $\mathcal{S}$.
The Fisher-Rao tensor on $\Theta$, $g$, is then rigorously defined as the pullback of a natural metric tensor on $\mathcal{M}^{1/2}$ along the smooth map $\sqrt{\mathbf{p}}:\Theta\rightarrow\mathcal{M}^{1/2}$:
\begin{equation}
    \label{equation:alphainformationunique_natural}
  g_\theta(v,w) = 4\int_\mathcal{S}d(d_\theta\sqrt{\mathbf{p}}(v)  d_\theta\sqrt{\mathbf{p}}(w))
\end{equation} 
Letting $\partial_v$ denote the Gateaux derivative with respect of to $v\in T_\theta\Theta$, and letting the Radon-Nikodym derivative $q = \frac{d\mathbf{p}(\theta)}{dV}$, the gradient of $\mathcal{D}_f(p\|q)$ at $\theta$ is characterized similarly to on Riemannian manifolds:
\begin{equation}
    \label{equation:alphainformationunique_gradient}
    \partial_v \mathcal{D}_f(p\|q)
    = g_\theta(v,\grad \mathcal{D}_f(p\|q))
\end{equation}
Now, observe that
\begin{equations}
    \partial_v D_f(p\|q)
    &= \partial_v \int_\mathcal{S} p(s) f\left[ \frac{q(s)}{p(s)} \right] dV(s) \\
    &= \int_\mathcal{S} \left\{ \partial_v q(s) \right\} f' \left[ \frac{q(s)}{p(s)} \right] dV(s) \\
    &= 4 \int_\mathcal{S} d \left( \left\{ \frac{\sqrt{V(s)}}{2\sqrt{q(s)}} \partial_v q(s) \right\} \left\{ \frac{\sqrt{V(s)}}{2\sqrt{q(s)}} q(s) f' \left[ \frac{q(s)}{p(s)} \right] \right\} \right) \\
    &= 4\int_\mathcal{S}d\left(d_\theta\sqrt{\mathbf{p}}(v)  d_\theta\sqrt{\mathbf{p}}(w)\right)
\end{equations}
with $w\in T_\theta\Theta$ satisfying $d_\theta\mathbf{p}(w)=q f'\left[ \frac{q}{p}\right]V$. Comparing to~\eqref{equation:alphainformationunique_natural} and~\eqref{equation:alphainformationunique_gradient}, we see that $w=\grad D_f(p\|q)$ on $\Theta$ and that the Radon-Nikodym derivative (with respect to $V$) of its pushforward to $\mathcal{M}$ (along $\mathbf{p}$) is given by $q f'\left[ \frac{q}{p}\right]$. 
We need the gradient on $\mathcal{M}$, rather than $\Theta$, in order to relate it to the derivative $\frac{d}{dt}\gamma(0)\in T_{\gamma(0)}\mathcal{M}$.
Now, suppose that the function $f$ is geodetic, then
\begin{equation}
    q f' \left[ \frac{q}{p} \right]
    = - \eta \frac{2}{1-\alpha}q \bigg( \left[ \frac{p}{q} \right]^\frac{1-\alpha}{2}-1\bigg) 
    \Leftrightarrow f'(x)
    = \eta \frac{2}{1-\alpha}(1 - x^{\frac{\alpha-1}{2}}).
\end{equation}
Integrating both sides,
\begin{equation}
    f(x) = \eta \bigg(-\frac{4}{1-\alpha^2}x^\frac{\alpha+1}{2} + \frac{2}{1-\alpha}x \bigg) + C.
\end{equation}
Since $f(1)=0$ as $f$ induces an $f$-divergence, we obtain $C = \frac{2 \eta}{\alpha+1}$. The theorem then follows from
\begin{equation}
    f(x) 
    = \eta \bigg(-\frac{4}{1-\alpha^2}x^\frac{\alpha+1}{2} + \frac{2}{1-\alpha}x + \frac{2}{1+\alpha} \bigg) 
    = \eta \left\{ f_\alpha(x) + \frac{2}{1-\alpha}(x-1) \right\}.
\end{equation}
\end{proof}

\uniquerewards*
\begin{proof}
To be explicit,
\begin{equation}
    \bar{R}(p_\pi) \coloneqq (n+1)\int_\mathcal{S}p_\pi(s)\bar{r}(s)dV(s) = \int_{\mathcal{S}^{n+1}} \sum_{i=0}^n \bar{r}(s_i) d\mu(s_0)\prod_{i=1}^n dM(s_{i-1},s_i)
\end{equation}
where the equality holds by Lemma \ref{lemma:return_occupancy}.
The unique invariance under the agent-environment interaction when $p=p_\pi$ follows from the unique invariance of $p_\pi$ under the agent-environment interaction by Theorem \ref{theorem:ergodicity}.
For information monotonicity, we may consider $\bar{f}\left[p_\pi(s)\right]=f \left[ p_\pi(s)^{-1} \right]$ for some function $f$ without loss of generality, since the correspondence between $\bar{f}$ and $f$ is bijective.
The proof amounts to showing that $f$ satisfying information monotonicity is equivalent to strict concavity of $f$. Define $P'\coloneqq h_\kappa(P)$ and (abusing notation) $p'\coloneqq h_\kappa(p)$, recalling that $h_\kappa$ denotes the Markov morphism induced by the statistic $\kappa$.
We need to show that strictly concave $f$ uniquely satisfy
\begin{equation}
\label{equation:uniquerewards_coupledinequality}
    \int_\mathcal{S} f \left[ \frac{1}{p(s)} \right] dP(s) \overset{!}\leq \int_{\kappa(\mathcal{S})} f \left[ \frac{1}{p'(s)} \right] dP'(s) 
\end{equation}
with equality if and only if the statistic $\kappa$ is sufficient.
We use $!$ to indicate that a relation does not hold in general.
By \citep[Proposition 5.5-5.6]{ay2017information}, the statistic $\kappa$ is sufficient if and only if there exists $\omega:\mathcal{S}\rightarrow\mathbb{R}$ such that
\begin{equation}
    \label{equation:uniquerewards_sufficiency}
    p_\theta(s) \overset{!}{=} \omega(s)p_\theta'(\kappa(s))
\end{equation}
with $\omega$ independent of the probability density parameterization $\theta$.
If the statistic $\kappa$ is injective, we trivially have sufficiency of the statistic with equality in~\eqref{equation:uniquerewards_coupledinequality}. Assume therefore that the statistic $\kappa$ is not injective, such that at least one set of states $\Phi\subseteq\mathcal{S}$ with $|\Phi|>1$ is mapped to a single state $\kappa(\Phi)$. 
Note that, by definition, $P(\Phi)=P'(\kappa(\Phi))$,
Therefore, it suffices to consider the information monotonicity for the case of $\Phi$, since each such case corresponds to a certain amount of probability measure contributing additively to both sides of~\eqref{equation:uniquerewards_coupledinequality}.
Now, observe that
\begin{align}
    \label{equation:uniquerewards_ladder_1}
    \int_{\kappa(\Phi)} f\left[ \frac{1}{p'(s)} \right] dP'(s) 
    &= \int_{\kappa(\Phi)} f\left[ \frac{dV'(s)}{dP'(s)} \right] dP'(s) \nonumber \\
    &= P'(\kappa(\Phi)) f\left[ \frac{V'(\kappa(\Phi))}{P'(\kappa(\Phi))} \right] 
    = P(\Phi) f\left[ \frac{V(\Phi)}{P(\Phi)} \right] 
\end{align}
and notice that the argument of $f$ can be rewritten as
\begin{equation}
    \label{equation:uniquerewards_ladder_2}
    \frac{V(\Phi) }{ P(\Phi)}
    = \frac{\int_\Phi dV(s) }{ P(\Phi)}
    = \int_\Phi \frac{ p(s) }{ P(\Phi)} \frac{1}{p(s)} dV(s)
    = \int_\Phi \frac{1}{p(s)}  \frac{ dP(s) }{ P(\Phi)}
\end{equation}
Combining~\eqref{equation:uniquerewards_ladder_1} and~\eqref{equation:uniquerewards_ladder_2} with \eqref{equation:uniquerewards_coupledinequality}, we see that theorem follows from showing that strictly concave $f$ uniquely satisfy
\begin{equations}
    \int_\Phi f\left[  \frac{1}{p(s)} \right] dP(s) 
    &= P(\Phi) \int_\Phi f\left[  \frac{1}{p(s)} \right] \frac{ dP(s) }{ P(\Phi)} \\
    &\overset{!}{\leq}  P(\Phi) f\left[ \int_\Phi \frac{1}{p(s)} \frac{ dP(s) }{ P(\Phi)} \right]
    = \int_{\kappa(\Phi)} f\left[ \frac{1}{p'(s)} \right] dP'(s) 
\end{equations}
with equality if and only if $\kappa$ is a sufficient statistic.
Notice that satisfying the inequality with equality if and only if $p(s_1)^{-1}=p(s_2)^{-1}$ for any $s_1,s_2\in\Phi$, is equivalent to strict concavity by Jensen's inequality. Therefore, the theorem follows from showing that $p(s_1)=p(s_2)$ for any $s_1,s_2\in\Phi$ is equivalent to sufficiency of $\kappa$. Since our theorem must hold when $\mathcal{P}$ is the space of all probability measures, we may consider arbitrary parameterizations $\theta$ in~\eqref{equation:uniquerewards_sufficiency} without loss of generality. In particular, consider $s_1,s_2\in\Phi$ and $p_{\theta_1},p_{\theta_2}\in\mathcal{P}$ such that $p_{\theta_1}(s_2)=p_{\theta_2}(s_1)$. Then
\begin{equation}
    \frac{p_{\theta_1}(s_1)}{p'_{\theta_1}(\kappa(s_1))} 
    \overset{!}{=} \frac{p_{\theta_2}(s_1)}{p'_{\theta_2}(\kappa(s_1))}
    = \frac{p_{\theta_1}(s_2)}{p'_{\theta_1}(\kappa(s_2))}
    \Leftrightarrow p_{\theta_1}(s_1) \overset{!}{=} p_{\theta_1}(s_2)
\end{equation}
since $p'_{\theta_1}(\kappa(s_1))=p'_{\theta_1}(\kappa(s_2))$, which shows that $\kappa$ is sufficient if and only if $p(s_1)=p(s_2)$ for any $s_1,s_2\in\Phi$.
\end{proof}

\equivalencecountbased*
\begin{proof}
We first prove the case $\alpha=0$. From the definition of $\alpha$-information,
\begin{equation}
    I_0(s;p) \coloneqq I_{f_{\alpha=0}}(s;p) = 4\left(\sqrt{\frac{1}{p(s)}}-1\right).
\end{equation}
For a finite state space $\mathcal{S}$, we have
\begin{equation}
    p_\pi(s) = \frac{n(s)}{n+1}
\end{equation}
where $n(s)$ is the state count for state $s$ and $(n+1)$ is the total state count. The result then follows from
\begin{equation}
    \label{equation:hellinger_count}
    I_0(s,p_\pi) = \sqrt{\frac{16(n+1)}{n(s)}} -4,
\end{equation}
since $\sqrt{16(n+1)}$ is a constant which can be absorbed into $\beta$ and $4$ is a constant that disappears under gradients. 

We now prove the case $\alpha=-1$.
From Lemma \ref{lemma:return_occupancy}, we have
\begin{equations} 
    \!\!\!\! R_{\alpha,\beta}(\pi) 
    &= \int_{\mathcal{S}^{n+1}}R_{\alpha,\beta}(s;p_\pi) d\mu(s_0)\prod_{i=1}^n dM(s_{i-1},s_i) \\
    &= \int_{\mathcal{S}^{n+1}} \left\{ r(s) + \beta I_\alpha(s,p_\pi) \right\} d\mu(s_0)\prod_{i=1}^n dM(s_{i-1},s_i)  \\
    &= \int_{\mathcal{S}^{n+1}} r(s) d\mu(s_0)\prod_{i=1}^n dM(s_{i-1},s_i) + \beta \int_{\mathcal{S}^{n+1}} I_\alpha(s,p_\pi) d\mu(s_0)\prod_{i=1}^n dM(s_{i-1},s_i)  \\
    &= R(\pi) + \beta \int_{\mathcal{S}^{n+1}}  I_\alpha(s,p_\pi) d\mu(s_0)\prod_{i=1}^n dM(s_{i-1},s_i) \\
    \label{equation:equivalence_maximum_entropy_separation}
    &= R(\pi) + \beta(n+1) \int_\mathcal{S} p_\pi(s) I_\alpha(s,p_\pi) dV(s) 
\end{equations}
Now,
\begin{equation}
    \lim_{\alpha\rightarrow-1} \int_\mathcal{S} p_\pi(s)I_\alpha(s,p_\pi) dV(s)
    = \int_\mathcal{S} p_\pi(s) \log \frac{1}{p_\pi(s)} dV(s) \\
    \label{equation:equivalence_maximum_entropy_shannon}
    = H(p_\pi).
\end{equation}
Using (\ref{equation:equivalence_maximum_entropy_shannon}) and (\ref{equation:equivalence_maximum_entropy_separation}), the statement then follows from
\begin{equations}
  \lim_{\alpha\rightarrow-1} R_{\alpha,\beta}(\pi) 
  &= R(\pi) + \beta(n+1) H(p_\pi)
\end{equations}
since $(n+1)$ is a constant that can be absorbed into $\beta$.
\end{proof}

\optimadivergence*
\begin{proof}
From Lemma \ref{lemma:return_occupancy}, we have using the same derivation as~\eqref{equation:equivalence_maximum_entropy_separation},
\begin{equations}
    R_{\alpha,\beta}(p) 
    &= R(\pi) + \beta(n+1) \int_\mathcal{S} p_\pi(s) I_\alpha(s,p_\pi) dV(s) \\  
    &= R(\pi) - \beta(n+1)u^{-\frac{\alpha+1}{2}} \mathcal{D}_\alpha(p\| u) + \beta(n+1)(1-u^{-\frac{\alpha+1}{2}})
\end{equations}
Then, the optima
\begin{equations}
    p_{\alpha,\beta} 
    &= \argmax_{p\in\mathcal{P}_\Pi} R_{\alpha,\beta}(p) \\
    &= \argmax_{p\in\mathcal{P}_\Pi} \left\{ R(p) - \beta(n+1)u^{-\frac{\alpha+1}{2}} \mathcal{D}_\alpha(p\| u) \right\} \\
    &= \argmin_{p\in\mathcal{P}_\Pi} \left\{ \mathcal{D}_\alpha(p\| u) - \frac{u^\frac{\alpha+1}{2}}{\beta(n+1)}R(p) \right\} \\
    \label{equation:optimadivergence_partial}
    &= \argmin_{p\in\mathcal{P}_\Pi} \left\{ \mathcal{D}_\alpha(p\| u) - \frac{u^\frac{\alpha+1}{2}}{\beta(n+1)}[R(p) - c] \right\}
\end{equations}
with $c\in\mathbb{R}$ a constant. Interpreting $-\frac{u^\frac{\alpha+1}{2}}{\beta(n+1)}$ as a Lagrange multiplier, we have
\begin{equation}
    p_{\alpha,\beta} 
    = \argmin_{p\in\mathcal \{ p \in \mathcal{P} : R(p) = c_{\alpha,\beta} \}} \mathcal{D}_\alpha(p\| u) 
    = \argmin_{p\in \mathcal{H}(c_{\alpha,\beta})} \mathcal{D}_\alpha(p\| u).
\end{equation}
Note that we used convexity of $\mathcal{P}_\Pi$ to ensure strong duality, establishing the correspondence between the Lagrangian and the constrained optimization problem. 
\end{proof}

\optimaprojections*
\begin{proof}
The result follows from Lemma \ref{lemma:optima_divergence} and the $\alpha$-projection theorem \citep[Theorem 4.6]{amari2016information}.
\end{proof}

\optimaform*
\begin{proof}
From~\eqref{equation:optimadivergence_partial}, we have
\begin{equations}
    p_{\alpha,\beta} 
    &= \argmin_{p\in\mathcal{P}} \left\{ \mathcal{D}_\alpha(p\| u) - \frac{u^\frac{\alpha+1}{2}}{\beta(n+1)}[R(p) - c] \right\} \\
    &= \argmin_{p\in\mathcal{P}} \left\{ -\frac{4}{1-\alpha^2}\int p(s)^\frac{1-\alpha}{2} dV(s) - \frac{1}{\beta} \Big(\int p(s)r(s) dV(s) - c\Big) \right\}
\end{equations}
These distributions are determined by minimizing a Lagrangian functional, composed of an entropy function and constrained by a fixed expected reward and a normalization condition.
\begin{equation}
    L[p] = -\frac{4}{1-\alpha^2}\int p^\frac{1-\alpha}{2} dV -\frac{1}{\beta} \Big(\int pr dV - c\Big)+ \lambda \Big(\int p dV - 1\Big)
\end{equation}
where $\lambda$ is a Lagrange multiplier. The minima is obtained by enforcing $\delta L =0$, where
\begin{equation}
    \delta L[p]
    = \int \Big[ -\frac{2}{\alpha+1} p^{-\frac{\alpha+1}{2}} - \frac{r}{\beta} + \lambda \Big] \delta p\, dV = 0
\end{equation}
this requires that function inside the squared brackets vanishes, leading to
\begin{equation}
    \label{equation:lagrangian_first_optimality}
    p(s) = \left[ \frac{\lambda(\alpha+1)}{2}\right]^{-\frac{2}{\alpha+1}}
    \Big(1 - \frac{r(s)}{\lambda\beta}\Big)^{-\frac{2}{\alpha+1}}.
\end{equation}
From the normalization of the probability distribution~\eqref{equation:lagrangian_first_optimality}, $\int p dV = 1$, one finds 
\begin{equation}
    \label{equation:lagrangian_second_optimality}
    \left[ \frac{\lambda(\alpha+1)}{2} \right]^{-\frac{2}{\alpha+1}} = \frac{1}{\int (1 - \frac{r}{\lambda\beta})^{-\frac{2}{\alpha+1}}dV}
\end{equation}
where we used (\ref{equation:lagrangian_first_optimality}).
Substituting (\ref{equation:lagrangian_second_optimality}) into (\ref{equation:lagrangian_first_optimality}), we get
\begin{equation}
    \label{equation:optima_with_policy}
    p_{\alpha,\beta} = \frac{(1-\frac{r}{\lambda\beta})^{-\frac{2}{\alpha+1}}}{\int (1-\frac{r}{\lambda\beta})^{-\frac{2}{\alpha+1}}dV}
\end{equation}
from which the result follows since a reparameterization $\lambda\leftarrow-\lambda$ does not change the form.
\end{proof}

\optimageodesic*
\begin{proof}
We prove first the forward direction by cases, assuming $f=f_\alpha$.
Assume first that $\alpha\neq-1$. The $(\alpha+2)$-geodesic connecting $p$ and $q$ is given by 
\begin{equation}
    \gamma_{p,q}(t) = \left\{ (1-t) p^{-\frac{1+\alpha}{2}} + t q^{-\frac{1+\alpha}{2}} \right\}^{-\frac{2}{1+\alpha}} \xi(t)
\end{equation}
where $t\in [0,1]$ and $\xi(t)$ normalizes $\gamma_{p,q}$ at $t$~\citep[Equation 2.59]{ay2017information}. 
Let $\beta_0,\beta_1\in(0,\infty)$ parameterize two points $p_0 = p_{\alpha,\beta_0}$ and $p_1 = p_{\alpha,\beta_1}$ connected by $\gamma_{p_0,p_1}$. By Lemma~\ref{lemma:optima_form}, $p_{0,1}^{-\frac{1+\alpha}{2}} = \xi_{0,1} (1+ \mu_{0,1}r)$ 
\begin{equation}
    \gamma_{p_{0},p_{1}}(t) 
    = \left\{ (1-t)(1+\mu_0 r)\xi_0 + t(1+\mu_1 r)\xi_1 \right\}^{-\frac{2}{1+\alpha}} \xi(t),
\end{equation}
with $\mu_{0,1} = \frac{1}{\lambda\beta_{0,1}}$ adopted for brevity. Grouping the terms linear in $r$,
\begin{equations}
    \gamma_{p_{0},p_{1}}(t)
    &= \left\{ 1 + \left[ \left( 1-\frac{t\xi_1}{(1-t)\xi_0 + t\xi_1} \right) \mu_0 + \frac{t\xi_1}{(1-t)\xi_0 + t\xi_1}\mu_1 \right] r \right\}^{-\frac{2}{1+\alpha}} \nonumber \\
    &\hphantom{=} \quad [ (1-t)\xi_0 + t\xi_1 ]^{-\frac{2}{1+\alpha}} \xi(t) \\
    &= \left\{ 1 + \left[ (1-\hat{t})\mu_0 + \hat{t}\mu_1 \right] r \right\}^{-\frac{2}{\alpha+1}} [ (1-t)\xi_0 + t\xi_1 ]^{-\frac{2}{\alpha+1}} \xi(t) \\
    \label{equation:normalization}
    &= \left\{ 1 + \left[ (1-\hat{t})\mu_0 + \hat{t}\mu_1 \right] r \right\}^{-\frac{2}{\alpha+1}} \hat{\xi}(\hat{t})
\end{equations}
where $\hat{t} \coloneqq\frac{t\xi_1}{(1-t)\xi_0+t\xi_1}$ and $\hat{\xi}(\hat{t})$ ensures normalization of~\eqref{equation:normalization}.
\begin{equations}
    \gamma_{p_{0},p_{1}}(\hat{t})^{-\frac{\alpha+1}{2}} &= \left\{ 1 + \left[ (1-\hat{t})\mu_0 + \hat{t}\mu_1 \right] r \right\} \hat{\xi}(\hat{t}) \\
    &= \left\{ 1 + \left[ (1-\hat{t})\frac{1}{\lambda\beta_0} + \hat{t}\frac{1}{\lambda\beta_1} \right] r \right\} \xi(t)
\end{equations}
leading to
\begin{equation}
    \gamma_{p_{0},p_{1}}(\hat{t}) = \left\{ 1 +  \left[ (1-\hat{t})\frac{1}{\lambda\beta_0} + \hat{t}\frac{1}{\lambda\beta_1} \right] r \right\} ^{-\frac{2}{\alpha+1}}\phi(t)  
\end{equation}
where $\phi(t) \coloneqq \xi(t)^{-\frac{2}{\alpha+1}}$ normalizes $\gamma_{p_{0},p_{1}}(\hat{t})$. Then,
\begin{equations}
    \gamma_{p_{0},u}(t) 
    &= \lim_{\beta_1\rightarrow\infty} \gamma_{p_{0},p_{1}}(t) \\  
    &= \left[1 +  \frac{(1-\hat{t})}{\lambda\beta_0}r\right]^{-\frac{2}{\alpha+1}}\phi(t) \\  
    &= p_{\alpha,\beta_t}
\end{equations}
where $\beta_t\coloneqq\frac{\beta_0}{1-\hat{t}}$. Therefore,
\begin{equation}
    \left\{ \gamma_{p_{0},u}(t) : t\in(0,1) \right\} 
    \subseteq \left\{ p_{\alpha,\beta} : \beta \in (\beta_0,\infty) \right\}. 
    \label{equation:corollary_forward}
\end{equation}
For any $\beta>\beta_0$, there exists a $\bar{t}\in(0,1)$ by the intermediate value theorem such that
\begin{equation}
    p_{\alpha,\beta} = \gamma_{p_{0},u}(\bar{t})
\end{equation}
since $t\mapsto\beta_t$ is a continuous map between the connected intervals $[0,1)$ and $[\beta_0,\infty)$. Thus,
\begin{equation}
    \label{equation:corollary_backward}
    \left\{ p_{\alpha,\beta} : \beta \in (\beta_0,\infty) \right\}
    \subseteq \left\{ \gamma_{p_{0},u}(t) : t\in(0,1) \right\}. 
\end{equation}
Taking the closure of~\eqref{equation:corollary_forward} and~\eqref{equation:corollary_backward}, we have the result for $\alpha\neq-1$:
\begin{equation}
    \left\{ \gamma_{p_{0},u}(t) : t\in[0,1] \right\} 
    \subseteq \overline{ \left\{ p_{\alpha,\beta} : \beta \in (\beta_0,\infty) \right\} }
    \subseteq \left\{ \gamma_{p_{0},u}(t) : t\in[0,1] \right\}.
\end{equation}
For $\alpha=-1$, we have from a reparameterization $\lambda\leftarrow-\frac{2\lambda}{\alpha+1}$ of the optima in Lemma \ref{lemma:optima_form} (the reparameterization does not change the form) and the definition of Euler's number $e^x\coloneqq\lim_{n\rightarrow\infty}(1+\frac{x}{n})^n$,
\begin{equations}
    \lim_{\alpha\rightarrow-1} p_{\alpha,\beta}
    &= \lim_{\alpha\rightarrow-1} \frac{(1-\frac{\alpha+1}{2\lambda\beta}r)^{-\frac{2}{\alpha+1}}}{\int (1-\frac{\alpha+1}{2\lambda\beta}r)^{-\frac{2}{\alpha+1}}dV} \\
    &= \frac{e^{\frac{r}{\lambda\beta}}}{\int e^{\frac{r}{\lambda\beta}} dV}.
\end{equations}
The forward direction result follows from a similar argument to the $\alpha\neq-1$ case.
\\\\
For the backward direction, assume that $\beta\mapsto p_{f,\beta}$ is an $(\alpha+2)$-geodesic. Let $\beta'>\beta$ and define
\begin{equation}
  \bar{\mathcal{D}}_f(\beta) \coloneqq D_f(p_{f,\beta},u)
\end{equation}
By optimality of $p_{f,\beta}$ and $p_{f,\beta'}$, we have the inequalities
\begin{equations}
  R(p_{f,\beta}) - \beta \bar{\mathcal{D}}_f(\beta) &> R(p_{f,\beta'}) - \beta \bar{\mathcal{D}}_f(\beta') \\
  R(p_{f,\beta'}) - \beta' \bar{\mathcal{D}}_f(\beta') &> R(p_{f,\beta}) - \beta' \bar{\mathcal{D}}_f(\beta)
\end{equations}
Adding the inequalities and using $\beta'>\beta$, we get
\begin{equations}
   R(p_{f,\beta}) - \beta \bar{\mathcal{D}}_f(\beta) + R(p_{f,\beta'}) - \beta' \bar{\mathcal{D}}_f(\beta') &> R(p_{f,\beta'}) - \beta \bar{\mathcal{D}}_f(\beta') + R(p_{f,\beta}) - \beta' \bar{\mathcal{D}}_f(\beta) \\
  \therefore 
  - \beta \bar{\mathcal{D}}_f(\beta) - \beta' \bar{\mathcal{D}}_f(\beta') &> - \beta \bar{\mathcal{D}}_f(\beta') - \beta' \bar{\mathcal{D}}_f(\beta) \\
  \therefore 
  \bar{\mathcal{D}}_f(\beta) &> \bar{\mathcal{D}}_f(\beta') 
\end{equations}
which shows that $\bar{\mathcal{D}}_f$ is strictly monotonic decreasing. By a similar argument, $\bar{\mathcal{D}}_\alpha(\beta) \coloneqq D_\alpha(p_{f_\alpha,\beta},u)$ is strictly monotonic decreasing. Since $\bar{\mathcal{D}}_f$ is strictly monotonic decreasing, its inverse $\bar{\mathcal{D}}_f^{-1}$ exists and is strictly monotonic decreasing. Then, for any $p$ along the $(\alpha+2)$-geodesic,
\begin{equation}
  D_\alpha(p,u) = (\bar{\mathcal{D}}_\alpha \circ \bar{\mathcal{D}}_f^{-1}) \circ D_f(p,u), \quad p\in \gamma^{\alpha+2}_{p^*,u}
\end{equation}
Since $(\bar{\mathcal{D}}_\alpha\circ\bar{\mathcal{D}}_f^{-1})$ is strictly monotonic increasing, $D_\alpha(\cdot,u)$ is a strictly monotonic increasing function of $D_f(\cdot,u)$ when restricted to the $(\alpha+2)$-geodesic.
But then, this restricted $D_f$ is geodetic by the proof of Lemma \ref{lemma:geodetic}. Furthermore, since the proof of Lemma \ref{lemma:alphaunique} requires only the geodetic property along some $\alpha$-geodesic with distinct endpoints, we have $f=f_\alpha$.
\end{proof}
\optimapath*
\begin{proof}
  It is a standard result that if $P_\Pi$ is convex, a local optimum $p_{f,\beta}\in\mathcal{P}_\Pi$ is the unique global optimum, such that $\beta\mapsto p_{f,\beta}$ is a map.
  The result follows from this and Berge's maximum theorem. We give a simplified proof under convexity, following \cite{berge1959espaces}.
  We note that $(\beta,p)\mapsto R_{f,\beta}(p)$ is continuous in both $\beta$ and $p$. Let $\beta\in\mathbb{R}_{\geq 0}$ and $\varepsilon > 0$. We first show that $\gamma_f$ is lower semi-continuous. Since $(\beta,p)\mapsto R_{f,\beta}(p)$ is lower semi-continuous, there exists a neighborhood $(U,V)\subseteq \mathbb{R}_{\geq 0} \times \mathcal{P}_\Pi$ of $(\beta, \argmax_{p\in\mathcal{P}_\Pi} R_{f,\beta}(p))$ such that for any $(\beta',p')\in (U,V) $,
  \begin{equation}
    \max_{p\in\mathcal{P}_\Pi} R_{f,\beta}(p) < R_{f,\beta'}(p') + \varepsilon
  \end{equation}
Lower semi-continuity of $\gamma_f$ then follows from
  \begin{equation}
    \max_{p\in\mathcal{P}_\Pi} R_{f,\beta}(p) < R_{f,\beta'}(p') + \varepsilon \leq \max_{p\in\mathcal{P}_\Pi} R_{f,\beta'}(p) + \varepsilon
  \end{equation}
Next, we show that $\gamma_f$ is upper semi-continuous.
By upper semi-continuity of $R_{f,\beta}$, for each $p\in\mathcal{P}_\Pi$ there exists a neighborhood $(U_p,V_p)$ such that for any $(\beta',p')\in(U_p,V_p)$,
\begin{equation}
  R_{f,\beta'}(p') < R_{f,\beta}(p) + \varepsilon
\end{equation}
The sets $\{ V_p : p \in \mathcal{P}_\Pi \}$ are an open cover of $\mathcal{P}_\Pi$. Since $\mathcal{P}_\Pi$ is compact, there exists a finite subcover $\{ V_{p_k} : k \in (1,...,n) \}$. Now, for each $\beta'\in \bigcap_k U_{p_k}$, we have for any $p'\in\mathcal{P}_\Pi$,
\begin{equation}
  R_{f,\beta'}(p') < R_{f,\beta}(p_k) + \varepsilon
\end{equation}
for some $k$, since $p' \in V_{p_k}$ for some $k$. Upper semi-continuity of $\gamma_f$ then follows from
\begin{equation}
  \max_{p'\in\mathcal{P}_\Pi} R_{f,\beta'}(p') < \max_{k\in(1,...,n)} R_{f,\beta}(p_k) + \varepsilon \leq \max_{p\in\mathcal{P}_\Pi} R_{f,\beta}(p) + \varepsilon
\end{equation}
Since $\gamma_f$ is both lower and upper semi-continuous, it is continuous.

\end{proof}

\equivalencerenyi*
\begin{proof}
  Recall that Rényi maximum entropy exploration uses \eqref{equation:maximum_entropy} with Rényi entropy
  \begin{equation}
    H_\lambda(p) \coloneqq \frac{1}{1-\lambda} \log \int_\mathcal{X}p(x)^\lambda dP(x), \quad \lambda \in \mathbb{R}_{\geq0}
  \end{equation}
  instead of Shannon entropy $H$.
  Observe the relationship between Rényi entropy and the Rényi divergence
  \begin{equation}
    D_\lambda(p,u) 
    \coloneqq \frac{1}{\lambda-1}\log\int_\mathcal{S} p(s)^\lambda u^{1-\lambda}dV(s) 
    = - H_\lambda(p) - \log u
  \end{equation}
  Since the $\alpha$-divergences and Rényi divergences are monotonically related by the Box-Cox transform $F$,
  \begin{equation}
    D_\alpha(p,u) = F(D_\lambda(p,u)),
  \end{equation}
  we have
  \begin{equations}
    \int_{\mathcal{S}} p(s) I_{\alpha}(s;p) dV(s)
    &= - u^{-\frac{\alpha+1}{2}} D_\alpha(p,u) - \frac{4}{1-\alpha^2}(1 - u^{-\frac{\alpha+1}{2}} ) \\
    &= - u^{-\frac{\alpha+1}{2}} F(D_\lambda(p,u)) - \frac{4}{1-\alpha^2}(1 - u^{-\frac{\alpha+1}{2}} )
  \end{equations}
  Now, using Lemma \ref{lemma:return_occupancy},
  \begin{equations}
    R_{\alpha,\beta}(\pi) 
    &= R(\pi) + \beta(n+1) \int_{\mathcal{S}} p_\pi(s) I_{\alpha}(s;p_\pi) dV(s) \\
    &= R(\pi) -\beta(n+1) u^{-\frac{\alpha+1}{2}} F(D_\lambda(p,u)) - \frac{4 \beta(n+1)}{1-\alpha^2}(1 - u^{-\frac{\alpha+1}{2}})
  \end{equations}
  For a parameterized policy $\pi_\theta$, we then have
  \begin{equations}
    \partial_i R_{\alpha,\beta}(\pi_\theta)
    &= \partial_i \left\{ R(\pi_\theta) -\beta(n+1) u^{-\frac{\alpha+1}{2}} F(D_\lambda(p_{\pi_\theta},u)) - \frac{4 \beta(n+1)}{1-\alpha^2}(1 - u^{-\frac{\alpha+1}{2}}) \right\} \\
    &= \partial_i R(\pi_\theta) - \beta(n+1)u^{-\frac{\alpha+1}{2}} F'(D_\lambda(p_{\pi_\theta},u) \partial_i D_\lambda (p_{\pi_\theta},u) \\
    &= \partial_i R(\pi_\theta) + \beta(n+1)u^{-\frac{\alpha+1}{2}} F'(D_\lambda(p_{\pi_\theta},u) \partial_i H_\lambda(p_{\pi_\theta}) \\
    &= \partial_i R(\pi_\theta) + C(\theta) \beta \partial_i H_\lambda(p_{\pi_\theta}),
  \end{equations}
  with $C(\theta) \coloneqq (n+1)u^{-\frac{\alpha+1}{2}} F'(D_\lambda(p_{\pi_\theta},u)$, from which the result follows.
\end{proof}

\equivalenceclassic*
\begin{proof}
 Classic artificial curiosity uses
\begin{equation}
  r(s) + \beta \| s - f_\theta(s_0, \pi(s_0)) \|^2
\end{equation}
where $f_\theta$ is a neural network model of the transition function. By hypothesis, we have
\begin{equation}
  M^1(s_0,s) 
  = (2\pi)^\frac{d}{2} \exp(-\frac{1}{2}(s - f_\theta(s_0))^\top (s - f_\theta(s_0))),
\end{equation}
Since $I_{-1}(s;p) \coloneqq \lim_{\alpha\rightarrow-1}I_\alpha(s;p)= -\log p(s)$, then 
\begin{equations}
  I_{-1}(s;M^1(s_0,\cdot)) 
  &= -\log ((2\pi)^\frac{d}{2}\exp(-\frac{1}{2}(s-f_\theta(s_0))^\top (s-f_\theta(s_0)))) \\
  &= \frac{1}{2} (s-f_\theta(s_0))^\top (f_\theta(s)-s_0) - \frac{d}{2}\log 2\pi
\end{equations}
The statement follows since $\frac{1}{2}$ can be absorbed into $\beta$ and $\frac{d}{2}\log 2\pi$ vanishes under gradients.
\end{proof}

\end{document}